%% file: main.tex
\documentclass{article} 
\usepackage{iclr2027_conference,times}
\iclrfinalcopy 
\input{math_commands.tex}

\usepackage{hyperref}
\hypersetup{hidelinks}
\usepackage{url}

\usepackage{xcolor}
\definecolor{linkblue}{rgb}{0.10,0.25,0.60}
\usepackage{booktabs}
\usepackage{amsmath}
\usepackage{amssymb}
\usepackage{amsthm}
\usepackage{graphicx}
\usepackage{algorithm}
\usepackage{algorithmic}

\newtheorem{theorem}{Theorem}
\newtheorem{proposition}[theorem]{Proposition}

\newtheorem{corollary}[theorem]{Corollary}
\theoremstyle{definition}

\newcommand{\kl}{\operatorname{kl}}
\newcommand{\Mzero}{\textsf{M0}}
\newcommand{\Mone}{\textsf{M1}}
\newcommand{\Mtwo}{\textsf{M2}}
\newcommand{\VigilLR}{\textup{\textsc{Vigil-LR}}}
\newcommand{\VigilCap}{\textup{\textsc{Vigil-Cap}}}
\newcommand{\VigilCell}{\textup{\textsc{Vigil-Cell}}}
\newcommand{\TransportFirst}{\textup{\textsc{Transport-First}}}
\newcommand{\up}{$\uparrow$}
\newcommand{\down}{$\downarrow$}

\title{When Can Old Evaluations Certify a New Model? \\ Label-Efficient Release Decisions under Evaluator Drift}

\author{\textbf{Joyanta Jyoti Mondal}\textsuperscript{1,*},
\textbf{Mridul Banik}\textsuperscript{2,*},
\textbf{Md. Shifatul Ahsan Apurba}\textsuperscript{3,*},
\\
\textbf{Md Masud Al Mahmud}\textsuperscript{4}
\\
\textsuperscript{1}Department of Computer and Information Sciences, University of Delaware, USA\\
\textsuperscript{2}Department of Biomedical Informatics and Data Science,
University of Alabama at Birmingham, USA\\
\textsuperscript{3}Luddy School of
Informatics, Computing, and Engineering, Indiana University Indianapolis, USA\\
\textsuperscript{4}Department of Computer Science and Engineering, BRAC University, Bangladesh\\
\small{
\textsuperscript{\textbf{*}}Equal Contribution.
\textbf{Correspondence:}
\href{mailto:joyanta@udel.edu}{joyanta@udel.edu}
}
}

\begin{document}
\maketitle

\begin{abstract}
Releasing a model update requires certifying that its current-population risk stays below a threshold. Trusted labels are expensive, while a cheap evaluator, such as an LLM judge, scores every example. Reusing evaluator errors from earlier audits is tempting, but when may such evidence replace current labels? It depends on the status of history. If the errors can change invisibly, no label-free test detects the change, and every valid, useful certifier must keep buying labels at a rate we characterize; if a bound on the change is assumed, label-free certification is valid at an explicit error cost. For the middle ground, where history is informative but untrusted, we propose \emph{portfolio vigilance}, a sequential certifier mixing a betting expert guided by history with one that learns only from current labels; history affects only how it bets, so validity holds for any history. The contribution is not prior-informed betting or expert mixtures, but separating history that may enter validity from history that may only guide label collection. In a canonical model, accurate history shortens decisions but never raises the evidence growth rate; stale history can destroy it. On held-out CIFAR-10N and DICES-990 data, portfolio vigilance needs 0.465 (95\% CI $[0.327,0.575]$) and 0.740 ($[0.618,0.877]$) times the labels of a matched prediction-powered monitor, with no observed false certification, and fewer labels on all six external blocks. Under corrupted advice it stays within 8.0\% of its better component, while trusting history alone costs up to 1.66 times as much. In post-confirmatory repeated-judge experiments on DICES-990 and ToxicChat, changing a fixed LLM judge's rubric moves its scores beyond run-to-run variation; the portfolio then needs 0.790 ($[0.667,0.909]$) and 0.631 ($[0.520,0.770]$) times the labels of the matched monitor, and fewer than trusting history alone.
\end{abstract}

\section{Introduction}\label{sec:introduction}

Suppose a team fine-tunes a deployed model and must decide whether the new version may be released. The question is statistical: does the candidate's risk on the population it serves stay below a threshold, at a stated error level? Trusted labels, such as expert safety ratings, answer it but are slow and expensive. Cheap evidence is abundant: a learned evaluator or a large language model (LLM) judge can score every example, and earlier audits leave a ledger of examples that carry both cheap scores and trusted labels. Prediction-powered evaluation shows how to combine the two \citep{angelopoulos2023ppi}. Writing $Y$ for the trusted loss and $Q$ for its cheap prediction, the risk splits into a free part and a paid part,
\begin{equation}\label{eq:decomposition}
    \underbrace{\E_T[Y]}_{\text{target risk}}
    = \underbrace{\E_T[Q]}_{\text{recomputable without labels}}
    + \underbrace{\E_T[Y-Q]}_{\text{needs trusted labels}} .
\end{equation}
Only the mean residual $E=Y-Q$, the evaluator's average error, needs trusted labels. A historical ledger measures exactly this quantity, so reusing it is tempting: the model and the population change, but perhaps the evaluator's errors do not.

Whether this reuse is legitimate is the question of this paper. Existing methods estimate the residual from current labels \citep{angelopoulos2023ppi,zhang2026pprm}, calibrate the current evaluator against human labels \citep{feng2026noisy}, or let side information steer anytime-valid evidence without affecting its validity \citep{waudbysmith2024betting,chen2026sim2real}. None of them says when historical residual evidence may replace current labels and when it may only guide how they are collected. Nor does any certifier capture the benefit of informative but untrusted history while avoiding the cost of stale history, which can be large because changing the evaluator changes the measurement itself \citep{yang2026judgechanges}.

We answer both questions by distinguishing three statuses of historical evidence (Figure~\ref{fig:regimes}). Under \emph{hidden residual drift} (\Mzero), the evaluator's error can change without any trace in the cheap data, so history need not constrain the present. Under \emph{trusted transport} (\Mone), a declared bound limits how far the error can move, and history may enter the validity argument directly. Under \emph{untrusted advice} (\Mtwo), history may predict the present error but is not assumed correct, so it may only steer how evidence is collected. History can replace current labels only under \Mone, and because \Mzero{} and \Mone{} look identical in cheap data, that status must be assumed, not learned. For \Mtwo{}, we propose \emph{portfolio vigilance}. It samples the target pool at random, runs one betting process guided by history and one that learns the error from purchased labels alone, and certifies from their fixed mixture. Accurate history then saves labels, while stale history costs little and never affects validity.

\begin{figure}[t]
\centering
\includegraphics[width=\textwidth]{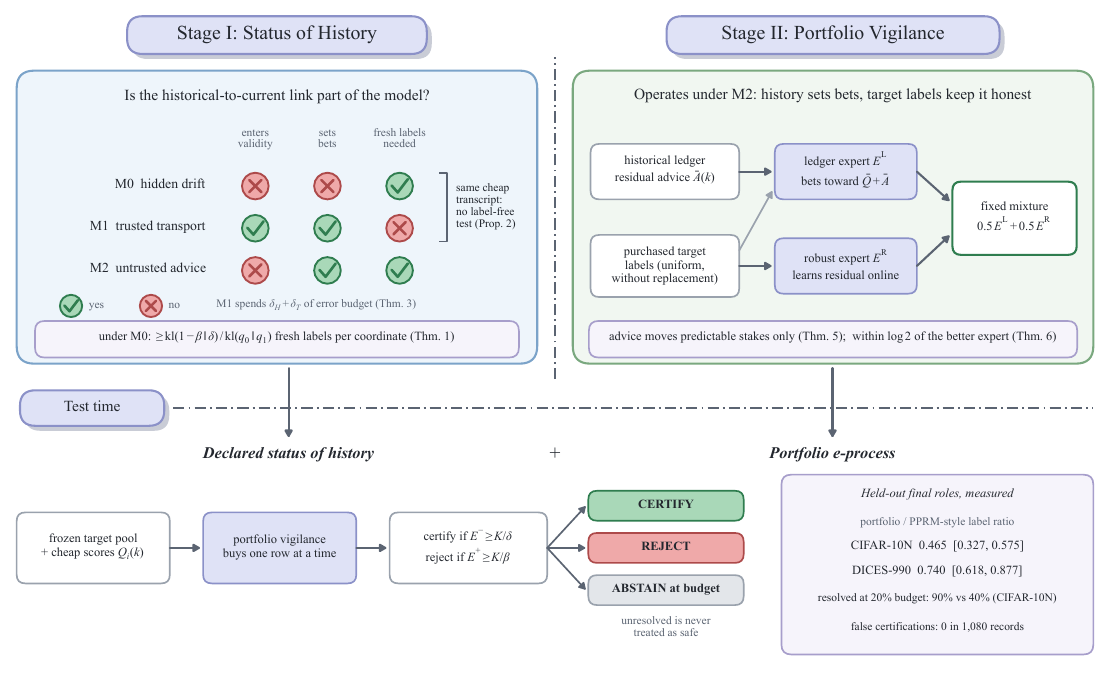}
\caption{Overview. \textbf{Stage I:} hidden drift (\Mzero) and trusted transport (\Mone) are indistinguishable from cheap data (Proposition~\ref{prop:nolabelfree}); \Mzero{} forces fresh labels (Theorem~\ref{thm:vigilance}), while \Mone{} certifies without them at a declared error cost (Theorem~\ref{thm:trustedtransport}). \textbf{Stage II:} under untrusted advice (\Mtwo), a ledger expert and a robust expert bet on each purchased label, and their equal-weight mixture is valid for any advice (Theorem~\ref{thm:validity}) and within $\log2$ evidence of the better expert (Theorem~\ref{thm:bestexpert}). \textbf{Test time:} rows are bought until each candidate is certified, rejected, or unresolved.}
\label{fig:regimes}
\end{figure}

Our contributions are:
\begin{itemize}
    \item \textbf{Transport or vigilance.} We prove that hidden drift and trusted transport are indistinguishable from cheap data, give a tight lower bound on the fresh labels honest certification needs, and price the error cost of a trusted bridge.
    \item \textbf{A phase diagram for untrusted history.} We characterize when historical advice helps, stops helping, or destroys evidence, and quantify the price of hedging.
    \item \textbf{Portfolio vigilance.} We propose a finite-population certifier that is valid under arbitrary advice and optimal among fixed mixtures in worst-case evidence regret.
    \item \textbf{Evaluation.} In a prespecified evaluation on held-out DICES-990 and CIFAR-10N roles, we compare against matched baselines, with external, corrupted-advice, and sensitivity analyses; a post-confirmatory study with repeated LLM judges on DICES-990 and ToxicChat tests evaluator drift directly.
\end{itemize}
Prediction-powered inference, e-values, betting confidence sequences, prior-informed betting, and expert mixtures all predate this work; the contribution is the information-model distinction, its consequences, and a certifier that makes it operational.

\section{Related work}\label{sec:related}

\paragraph{Prediction-powered and anytime-valid inference.} Prediction-powered inference corrects a cheap predictor's mean with labeled residuals \citep{angelopoulos2023ppi,angelopoulos2023ppiplus,zrnic2024cross,zrnic2024active}. Its anytime-valid and monitoring variants include prediction-powered e-values \citep{csillag2025ppevalues}, prediction-powered risk monitoring (PPRM) \citep{zhang2026pprm}, and judge auditing for best-arm identification \citep{ao2026bestarm}. Betting confidence sequences \citep{howard2021confseq,waudbysmith2024betting,waudbysmith2021wor,ville1939} underlie risk monitors \citep{podkopaev2022tracking,timans2025monitoring,koebler2025monitoring,xu2024active}. Simulator-guided betting also shows that side information can steer the bets of an anytime-valid certificate whose validity holds for any simulator bank \citep{chen2026sim2real}. All of these estimate the residual from current labels; none says when a historical residual may replace them.

\paragraph{Changing evaluators.} Noisy-but-Valid calibrates a current judge against human labels \citep{feng2026noisy}. Judge-replacement audits show that changing the judge changes the measurement \citep{yang2026judgechanges,li2026whodrifted}, and disagreement-based auditing exploits observable disagreement between models \citep{balachandran2026disagreement}. These works study the current evaluator or a visible signal; we ask whether residual evidence from a past evaluator still describes the present one.

\paragraph{Transport and adaptation.} Transport results assume a known shift or a declared slack \citep{tibshirani2019covshift,podkopaev2021labelshift,doula2026transfer,klivans2024testable}. Honest procedures, however, cannot adapt to unknown structure for free \citep{low1997,cailow2004,armstrong2018,mazzetto2023drift}. Our lower bound uses the change-of-measure argument of \citet{kaufmann2016}. We study the case these results leave open: a historical residual whose drift may leave no trace in any cheap observable. Appendix~\ref{app:related} gives a claim-level comparison.

\section{Current-risk certification and information models}\label{sec:setup}

Table~\ref{tab:notation} in the appendix collects the notation for reference. We audit a finite target pool of $N$ rows and $K$ prespecified candidate claims. For row $i$ and candidate $k$, let $Y_i(k)\in[0,1]$ be a trusted loss and $Q_i(k)\in[0,1]$ a cheap prediction of it, available for the whole pool before any trusted outcome is purchased. The current finite-population risk is
\begin{equation}\label{eq:risk}
    R(k)=\frac1N\sum_{i=1}^{N}Y_i(k),\qquad Y_i(k)=Q_i(k)+E_i(k),
\end{equation}
where $E_i(k)$ is the current evaluator residual. Candidate $k$ is safe when $R(k)\le\tau_k$, with $\tau_k$ fixed in advance. A certifier purchases trusted outcomes one row at a time, and each purchased row updates every candidate defined on it. It must be \emph{valid}, certifying some unsafe candidate with probability at most $\delta$, and should be \emph{useful}, resolving safe candidates with probability at least $1-\beta$. A ledger of paired cheap and trusted outcomes from an earlier role supplies a residual estimate $\bar A(k)$ of the historical mean residual $\mu_E^{\mathrm{hist}}(k)$.

\paragraph{\Mzero: unrestricted hidden residual drift.} The evaluator's error may change while every cheap observable keeps its distribution. Historical residuals then need not constrain the current residual mean $\mu_E^{\mathrm{tar}}(k)$.

\paragraph{\Mone: trusted transport.} Before target labels are inspected, a historical procedure supplies $U_H(k)$ with $\Pr\{\mu_E^{\mathrm{hist}}(k)\le U_H(k)\}\ge1-\delta_H$, and a declared transport model supplies a radius $\rho_k$ with $\Pr\{\mu_E^{\mathrm{tar}}(k)\le\mu_E^{\mathrm{hist}}(k)+\rho_k\}\ge1-\delta_T$. The bridge is part of the model class, so it may enter validity directly. Zero-fresh-label certification is legitimate whenever
\begin{equation}\label{eq:bridge}
    \overline Q_{\mathrm{tar}}(k)+U_H(k)+\rho_k\le\tau_k .
\end{equation}
As $\rho_k$ grows, this zero-label region shrinks.

\paragraph{\Mtwo: untrusted advice.} Historical residuals may predict current residuals but are not assumed correct; they may set predictable bets but may not alter the null, the threshold, or the error guarantee.

\section{Why vigilance is necessary}\label{sec:vigilance}

The difficulty under \Mzero{} is that a safe world and an unsafe world can produce identical cheap data, so only trusted labels tell them apart, and a certifier that is valid in both must buy some. We make this precise in a \emph{Bernoulli trusted-loss submodel} of \Mzero{} with $M$ monitored coordinates, for example deployment blocks times independently auditable strata. The cheap prediction is identically zero, $Q\equiv0$, so the residual equals the trusted loss, and each trusted-label request at coordinate $i$ reveals an independent draw $Y\sim\mathrm{Ber}(q)$. The submodel lies in \Mzero{} because \Mzero{} does not restrict how the residual law changes. The composite safe class has mean at most $q_0$ and the unsafe class at least $q_1$, $0<q_0<q_1<1$. Under the stationary boundary-safe law $P_0$ every queried loss is $\mathrm{Ber}(q_0)$, and $P_i$ makes coordinate $i$ alone $\mathrm{Ber}(q_1)$. The cheap transcript has the same law under $P_0$ and every $P_i$. Let $C_i=1$ when coordinate $i$ is certified and let $N_i$ be the number of labels queried there. We require validity across all coordinates (honesty) and certification of every coordinate of a safe stream (liveness),
\begin{equation}\label{eq:honesty}
    \Pr\big(\exists i:\ i\text{ unsafe and }C_i=1\big)\le\delta,
    \qquad
    P_0\big(C_i=1\text{ for every }i\big)\ge1-\beta,
\end{equation}
with $1-\beta>\delta$ and $\kl(a\|b)=a\log\frac ab+(1-a)\log\frac{1-a}{1-b}$.

\begin{theorem}[Price of vigilance]\label{thm:vigilance}
Every auditor satisfying Eq.~\ref{eq:honesty} obeys, for every coordinate $i$,
\begin{equation}
    \E_0[N_i]\ \ge\ \frac{\kl(1-\beta\|\delta)}{\kl(q_0\|q_1)},
    \qquad\text{hence}\qquad
    \E_0\Big[\sum_{i=1}^{M}N_i\Big]\ \ge\ M\,\frac{\kl(1-\beta\|\delta)}{\kl(q_0\|q_1)} .
\end{equation}
\end{theorem}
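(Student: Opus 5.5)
We follow the change-of-measure argument of \citet{kaufmann2016} and compare the boundary-safe law $P_0$ with the single-coordinate alternative $P_i$. The two laws differ only in the trusted labels queried at coordinate $i$: they are $\mathrm{Ber}(q_0)$ under $P_0$ and $\mathrm{Ber}(q_1)$ under $P_i$. The cheap transcript, the labels at every other coordinate, and the auditor's internal randomization have identical conditional laws under both. Let $\mathcal F_\sigma$ be the sigma-field of the auditor's full transcript up to its (almost surely finite, under $P_0$) stopping time $\sigma$. The first step is the divergence decomposition (Wald's identity applied to the log-likelihood ratio, which accumulates one $\kl(q_0\|q_1)$ per label queried at $i$):
\begin{equation*}
\mathrm{KL}\big(P_0|_{\mathcal F_\sigma}\,\big\|\,P_i|_{\mathcal F_\sigma}\big)=\E_0[N_i]\,\kl(q_0\|q_1).
\end{equation*}
If $\E_0[N_i]=\infty$ the bound is trivial, so assume it is finite. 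This is also what makes the stopped likelihood-ratio martingale argument legitimate, through optional stopping with an integrable number of informative increments.

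The second step is the data-processing inequality applied to the event $A_i=\{C_i=1\}\in\mathcal F_\sigma$. It gives
\begin{equation*}
\mathrm{KL}\big(P_0|_{\mathcal F_\sigma}\,\big\|\,P_i|_{\mathcal F_\sigma}\big)\ge \kl\big(P_0(A_i)\,\|\,P_i(A_i)\big).
\end{equation*}
Now bring in Eq.~\ref{eq:honesty}. Liveness gives $P_0(A_i)\ge P_0(C_j=1\ \forall j)\ge1-\beta$. Under $P_i$, coordinate $i$ is unsafe because its mean is $q_1$, so honesty gives $P_i(A_i)\le\delta$.

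The third step uses the monotonicity of the binary relative entropy. Since $1-\beta>\delta$, the map $(a,b)\mapsto\kl(a\|b)$ is nondecreasing in $a$ on $a\ge b$ and nonincreasing in $b$ on $b\le a$. Hence $\kl(P_0(A_i)\|P_i(A_i))\ge\kl(1-\beta\|\delta)$, and dividing by $\kl(q_0\|q_1)>0$ yields the per-coordinate bound. Summing over $i=1,\dots,M$ by linearity of expectation gives the total bound; no union or independence across coordinates is needed.

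The main obstacle is the first step in the sequential, adaptive setting. The auditor may choose which coordinate to query based on past labels and cheap data, may stop at a random time, and may use external randomness, so we must check that the likelihood ratio of the transcript factorizes cleanly. I would write the transcript density as a product of the auditor's (shared) decision kernels and the label likelihoods. All factors cancel except the Bernoulli likelihoods at queries of coordinate $i$. I would then apply Wald's identity to $\sum_{t\le\sigma}\mathbf 1\{\text{query }t\text{ at }i\}\log\frac{p_{q_0}(Y_t)}{p_{q_1}(Y_t)}$, whose conditional mean per query is $\kl(q_0\|q_1)$. A minor point is that $\sigma$ might be infinite with positive probability under $P_i$. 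Restricting attention to $\mathcal F_\sigma$ under $P_0$, where liveness forces almost-sure termination, or truncating at $\sigma\wedge T$ and letting $T\to\infty$, handles this.
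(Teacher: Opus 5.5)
Your proposal is correct and follows the paper's argument: the decomposition $\KL(P_0^{\mathrm{tr}}\|P_i^{\mathrm{tr}})=\E_0[N_i]\kl(q_0\|q_1)$, obtained from the chain rule under predictable querying with truncation and monotone convergence, then data processing through a certification event and monotonicity of $\kl$, then summation. The differences are cosmetic: you use the event $\{C_i=1\}$ where the paper uses the all-certified event. Also, liveness does not force almost-sure termination, since with probability up to $\beta$ the auditor may run forever or fall back, so your truncate-and-take-limits route is the one to keep (the paper truncates at the $r$-th query at coordinate $i$).
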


\emph{Proof sketch.} The all-certified event has probability at least $1-\beta$ under $P_0$ and at most $\delta$ under $P_i$. Only labels queried at coordinate $i$ distinguish the two laws, so the chain rule for a predictably sampled transcript gives $\KL(P_0^{\mathrm{tr}}\|P_i^{\mathrm{tr}})=\E_0[N_i]\kl(q_0\|q_1)$, and data processing through the indicator of that event yields the bound.

The cost is charged under $P_0$, where nothing changes: guarding against a possible change costs labels even when none occurs. A sequential likelihood-ratio certifier attains the bound up to an overshoot constant, and a deterministic cap pays an extra $\log M$ (Appendix~\ref{app:matching}).

\begin{proposition}[No label-free validation of transport]\label{prop:nolabelfree}
Let $\Psi$ be any transport-validity test measurable with respect to the cheap transcript. Then $P_0(\Psi=1)=P_i(\Psi=1)$ for every coordinate $i$.
\end{proposition}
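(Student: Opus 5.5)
The plan is to show that the cheap transcript has the same law under $P_0$ and under every $P_i$. After that, the statement is a one-line pushforward argument. Write the cheap transcript as $\mathcal{C}$, the $\sigma$-field generated by every cheap observable the test may use. That means the cheap predictions $Q$ on the whole pool, plus any auxiliary randomness and ledger data fixed before trusted labels are bought. Measurability of $\Psi$ with respect to $\mathcal{C}$ means $\Psi=g(\mathcal{C})$ for some measurable $g$, possibly after enlarging the space with independent randomization $U$ whose law does not depend on the model.

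The steps are as follows.

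\begin{enumerate}
\item Record that in the Bernoulli trusted-loss submodel, $Q\equiv0$ under $P_0$ and under every $P_i$. Any ledger or auxiliary quantity in $\mathcal{C}$ is specified identically across these laws, because the laws differ only in the mean of trusted losses at coordinate $i$. Hence the pushforward law $P_0\circ\mathcal{C}^{-1}$ equals $P_i\circ\mathcal{C}^{-1}$. This is exactly the assertion already made in Section~\ref{sec:vigilance} that the cheap transcript has the same law under $P_0$ and every $P_i$.
\item If $U$ is present, note that it is independent of everything else with the same law under every $P_i$, so the joint law of $(\mathcal{C},U)$ also agrees across the laws.
\item Conclude with
\[
P_0(\Psi=1)=P_0\big((\mathcal{C},U)\in g^{-1}(\{1\})\big)=P_i\big((\mathcal{C},U)\in g^{-1}(\{1\})\big)=P_i(\Psi=1).
\]
\end{enumerate}

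The only real obstacle is being precise about what "cheap transcript" contains. If the transcript were allowed to include the sequence of which rows were queried, that sequence can depend on previously revealed trusted labels. Its law could then differ between $P_0$ and $P_i$. I would therefore define the cheap transcript as the label-free information available before any trusted outcome is purchased, matching the definition of \Mone{}, where the bridge must be supplied before target labels are inspected.

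I would also remark that the argument does not need $Q\equiv0$ specifically. It holds for any pair of laws in \Mzero{} that share the marginal of cheap observables and differ only in the conditional law of $E$ given $Q$. The interpretation to state at the end is this: a test that uses only the cheap transcript has equal power and size against $P_0$ and $P_i$, so any level-$\alpha$ test of "transport holds" has power at most $\alpha$ against hidden drift.
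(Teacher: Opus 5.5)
Your proposal is correct and is the paper's argument. By construction of the Bernoulli submodel, the cheap transcript has the same law under $P_0$ and every $P_i$, so any $\Psi$ measurable with respect to it (a pushforward of that common law) has the same law under both. Your extra precision---treating the cheap transcript as pre-label information, so that label-dependent query choices are excluded, and allowing model-independent external randomization---matches how the paper uses the term (compare the pre-label transcript in Corollary~\ref{cor:nofreebridge}) and only makes the paper's one-line proof more explicit.
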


Residual stability therefore cannot be verified from cheap data and must be assumed; Theorem~\ref{thm:trustedtransport} prices that assumption. The failure is not only theoretical: in an exploratory Folktables stress test with hidden injected failures, a certifier that trusts a calibrated bridge without verifying it false-certifies up to $75\%$ of the unsafe blocks, exactly as often as a no-label oracle, while fresh-label baselines catch every one (Appendix~\ref{app:hiddenfailure}).

\begin{theorem}[Trusted-transport certification region]\label{thm:trustedtransport}
For each candidate $k$ let $H_k=\{\mu_E^{\mathrm{hist}}(k)\le U_H(k)\}$ and $B_k=\{\mu_E^{\mathrm{tar}}(k)\le\mu_E^{\mathrm{hist}}(k)+\rho_k\}$, with $\Pr(\bigcap_kH_k)\ge1-\delta_H$ and $\Pr(\bigcap_kB_k)\ge1-\delta_T$, and define the zero-label slack $s_k=\tau_k-\overline Q_{\mathrm{tar}}(k)-U_H(k)$ and the certifiable set $\mathcal C(\rho)=\{k:\ s_k\ge\rho_k\}$.
\begin{enumerate}
    \item[(i)] Certifying every $k\in\mathcal C(\rho)$ with zero fresh labels certifies some unsafe candidate with probability at most $\delta_H+\delta_T$. No independence between the historical and bridge events is required.
    \item[(ii)] If the bridge is part of the model class deterministically ($\delta_T=0$), the error is at most $\delta_H$, and it is zero if $U_H$ is also exact.
    \item[(iii)] For a common radius $\rho_k\equiv\rho$, $\mathcal C(\rho)$ is nonincreasing in $\rho$, the zero-label certifiable fraction $|\mathcal C(\rho)|/K$ is the empirical survival function of the slacks $\{s_k\}$, and $\mathcal C(\rho)=\emptyset$ once $\rho>\max_ks_k$; the limit $\rho\to\infty$ recovers \Mtwo, in which no candidate is certifiable without current labels.
\end{enumerate}
\end{theorem}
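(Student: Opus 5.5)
The argument is a deterministic inclusion of events followed by a union bound; there is no sequential or change-of-measure component. Throughout, $\overline Q_{\mathrm{tar}}(k)$ is treated as the exact pool average of the cheap predictions. Then Eq.~\ref{eq:risk} gives $R(k)=\overline Q_{\mathrm{tar}}(k)+\mu_E^{\mathrm{tar}}(k)$, where $\mu_E^{\mathrm{tar}}(k)=\frac1N\sum_iE_i(k)$ is the finite-population mean residual. This identity holds with no randomness, since $Q$ is observed on the whole pool.

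\textbf{Part (i).} The first step is to show that on $G=\bigcap_kH_k\cap\bigcap_kB_k$ no candidate in $\mathcal C(\rho)$ is unsafe. Fix $k\in\mathcal C(\rho)$; on $G$, chaining the two events gives
\begin{equation*}
R(k)=\overline Q_{\mathrm{tar}}(k)+\mu_E^{\mathrm{tar}}(k)\le\overline Q_{\mathrm{tar}}(k)+\mu_E^{\mathrm{hist}}(k)+\rho_k\le\overline Q_{\mathrm{tar}}(k)+U_H(k)+\rho_k\le\tau_k .
\end{equation*}
The last inequality is exactly $s_k\ge\rho_k$, i.e.\ Eq.~\ref{eq:bridge}. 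Hence the event ``some unsafe candidate is certified'' is contained in $G^c=(\bigcap_kH_k)^c\cup(\bigcap_kB_k)^c$. The union bound then gives probability at most $\delta_H+\delta_T$. The union bound needs no joint law of the two events, which is why no independence is required.

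A subtle point is that $\mathcal C(\rho)$ is random, since $U_H$ and possibly $\overline Q$ are data-dependent. This causes no difficulty, because the inclusion holds pointwise on $G$ simultaneously for all $k$. This simultaneity is why the hypotheses are stated for the intersections over $k$, and it is the only step needing care.

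\textbf{Part (ii).} If $\delta_T=0$, then $\bigcap_kB_k$ holds almost surely and the bound reduces to $\delta_H$. If $U_H$ is exact, i.e.\ $U_H(k)\ge\mu_E^{\mathrm{hist}}(k)$ surely (so $\delta_H=0$), then $G$ holds almost surely and the error is zero.

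\textbf{Part (iii).} Monotonicity is immediate: if $\rho\le\rho'$ and $s_k\ge\rho'$, then $s_k\ge\rho$, so $\mathcal C(\rho')\subseteq\mathcal C(\rho)$. For the survival-function claim, write $|\mathcal C(\rho)|/K=\frac1K\sum_k\mathbf 1\{s_k\ge\rho\}$, which is the empirical survival function $\rho\mapsto\hat S(\rho)$ of the slacks, taken in its left-closed convention. If $\rho>\max_ks_k$, then no indicator fires and $\mathcal C(\rho)=\emptyset$. Since $K$ is finite, $\max_ks_k<\infty$, so as $\rho\to\infty$ the set is eventually empty. Interpreting $\rho=\infty$ as ``no bound on $\mu_E^{\mathrm{tar}}$ relative to history'' is precisely the \Mtwo{} status, where history may not enter validity. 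This is consistent with Proposition~\ref{prop:nolabelfree}: no candidate can then be certified without current labels.
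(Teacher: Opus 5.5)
Your proposal is correct and follows the paper's proof essentially step for step. On $\bigcap_k H_k\cap\bigcap_k B_k$, the chain $R(k)=\overline Q_{\mathrm{tar}}(k)+\mu_E^{\mathrm{tar}}(k)\le\overline Q_{\mathrm{tar}}(k)+U_H(k)+\rho_k\le\tau_k$ makes every $k\in\mathcal C(\rho)$ safe, a union bound over the two complements gives $\delta_H+\delta_T$, and part (iii) is the identity $|\mathcal C(\rho)|/K=K^{-1}\sum_k\mathbf 1\{s_k\ge\rho\}$; your explicit remark that the inclusion holds simultaneously for all $k$, so a data-dependent $\mathcal C(\rho)$ causes no trouble, is a point the paper uses but leaves implicit.
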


The zero-label benefit of \Mone{} also cannot be learned adaptively from the same transcript (Corollary~\ref{cor:nofreebridge}, Appendix~\ref{app:deferred}).

\section{Portfolio vigilance}\label{sec:method}

We now turn to \Mtwo. Portfolio vigilance tests each candidate by betting. It holds a wealth that starts at one, multiplies it after each purchased label by a factor whose conditional mean is at most one when the candidate is unsafe, and certifies once the wealth is large. Such a wealth process is an e-process, and Ville's inequality \citep{ville1939} bounds the probability that it ever exceeds $1/\delta$ under the null. Formally, the certifier purchases rows of the target pool in one uniformly random order $\pi_1,\pi_2,\dots$ without replacement, and before draw $t$ its information $\mathcal F_{t-1}$ contains the cheap transcript of the whole pool, the fixed historical advice, and the previously purchased outcomes only. For candidate $k$ let $S_{t-1}(k)=\sum_{s<t}Y_{\pi_s}(k)$ be the cumulative purchased trusted loss. Under the certification null $R(k)\ge\tau_k$, the unpurchased rows have mean at least
\begin{equation}\label{eq:mt}
    m_t(k)=\frac{N\tau_k-S_{t-1}(k)}{N-t+1},
\end{equation}
so for any predictable stake $\lambda_t\ge0$ with $1+\lambda_t(m_t-y)\ge0$ on $[0,1]$, the factor $1+\lambda_t(k)\{m_t(k)-Y_{\pi_t}(k)\}$ has conditional mean at most one.

\paragraph{Two experts.} The \emph{ledger expert} forecasts the remaining-pool mean with the historical correction,
\[
    \hat p^{\mathrm L}(k)=\operatorname{clip}\big(\bar Q(k)+\bar A(k),0,1\big),
\]
where $\bar Q(k)$ is the all-pool cheap mean. The \emph{robust expert} starts with no residual correction and learns only from purchased target labels,
\[
    \hat p_t^{\mathrm R}(k)=\operatorname{clip}\Big(\bar Q(k)+\frac{1}{t-1}\sum_{s<t}\big[Y_{\pi_s}(k)-Q_{\pi_s}(k)\big],0,1\Big).
\]
The ledger expert thus trusts history from the first label, whereas the robust expert must learn the error before it bets confidently. For expert $e$ the certification-direction Kelly stake is $\lambda_t^{e,-}(k)=[m_t(k)-\hat p_t^e(k)]_+/\{m_t(k)(1-m_t(k))\}$, with the symmetric rejection-direction bet against the null $R(k)\le\tau_k$. The deployed evidence is the fixed mixture
\begin{equation}\label{eq:mixture}
    E_t^{-}(k)=\tfrac12E_t^{\mathrm L,-}(k)+\tfrac12E_t^{\mathrm R,-}(k),\qquad
    E_t^{+}(k)=\tfrac12E_t^{\mathrm L,+}(k)+\tfrac12E_t^{\mathrm R,+}(k).
\end{equation}
Algorithm~\ref{alg:portfolio} in Appendix~\ref{app:deferred} gives the procedure: a candidate is certified when $E_t^-(k)\ge K/\delta$, rejected when $E_t^+(k)\ge K/\beta$, resolved earlier by exact finite-population bounds, and otherwise left unresolved at the budget, never silently safe.

\begin{theorem}[Anytime validity under arbitrary advice]\label{thm:validity}
With losses in $[0,1]$, uniform sampling without replacement, thresholds and advice fixed without target-role outcomes, and predictable stakes, portfolio vigilance certifies some unsafe candidate with probability at most $\delta$ and rejects some safe candidate with probability at most $\beta$. The guarantee holds for arbitrary historical advice, candidate-specific stopping, shared purchased outcomes, and abstention at budget exhaustion.
\end{theorem}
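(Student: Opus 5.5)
The plan is to show that, for each candidate $k$ and each direction, both expert wealth processes are nonnegative supermartingales under the corresponding null. Averaging them gives the mixture, which is then also a nonnegative supermartingale. Ville's inequality at level $K/\delta$ (resp.\ $K/\beta$), followed by a union bound over the $K$ candidates, finishes the argument.

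\textbf{Step 1: one bet at a time.} Fix an unsafe $k$, so $R(k)\ge\tau_k$, which here plays the role of the certification null. Condition on $\mathcal F_{t-1}$. By uniform sampling without replacement, $\pi_t$ is uniform over the unpurchased rows. The fixed pool outcomes $Y_i(k)$ are not random; only the order is. Hence $\E[Y_{\pi_t}(k)\mid\mathcal F_{t-1}]$ equals the mean of the remaining rows, which is $(NR(k)-S_{t-1}(k))/(N-t+1)\ge m_t(k)$.

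The historical advice $\bar A(k)$, the cheap means $\bar Q(k)$ and $Q_{\pi_s}$, and the past $Y_{\pi_s}$ are all $\mathcal F_{t-1}$-measurable. This uses the hypothesis that the advice is fixed without target-role outcomes, and that the cheap transcript is independent of the random order. So both $\hat p^{\mathrm L}$ and $\hat p^{\mathrm R}_t$ are predictable, and so are the stakes $\lambda_t^{e,-}$. I would note that the Kelly stakes need clipping to keep $1+\lambda(m_t-y)\ge0$ on $[0,1]$. The algorithm in the appendix presumably truncates them to $[0,1/m_t)$, or to $1/(1-m_t)$ for the other direction, and I would also handle the degenerate cases $m_t\le0$ or $m_t\ge1$. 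When $m_t\le0$ the null is already refuted deterministically, and that case falls under the exact finite-population bound. Each factor then has conditional mean $1+\lambda_t(m_t-\E[Y\mid\mathcal F_{t-1}])\le1$, and it is nonnegative.

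\textbf{Step 2: supermartingale and mixture.} The products $E_t^{\mathrm L,-}(k)$ and $E_t^{\mathrm R,-}(k)$ are nonnegative supermartingales with initial value $1$. Their average $E_t^-(k)$ is one as well. This is where "arbitrary advice" enters: the advice only changes which predictable stake is used, never the supermartingale property.

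Since each row's purchased outcome updates every candidate, all $K$ processes live on the common filtration $\mathcal F_t$. Sharing outcomes is harmless, because the union bound needs no independence. Ville's inequality gives $\Pr(\sup_tE_t^-(k)\ge K/\delta)\le\delta/K$. This handles candidate-specific stopping and abstention, since certification is only the event that the running supremum ever crosses the threshold. Stopping or budget exhaustion just freezes the process, and a stopped supermartingale is still a supermartingale.

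\textbf{Step 3: the exact finite-population resolutions.} Certification by the exact bound happens only when $S_{t-1}(k)+$(the maximal possible remaining loss) $<N\tau_k$. That event has probability zero under the null, so it adds nothing.

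Summing over the unsafe candidates gives total probability at most $\delta$. The rejection side is symmetric: under $R(k)\le\tau_k$, use the bet $1+\lambda(Y-m_t)$ with $m_t$ defined from the same $\tau_k$, the reversed inequality on the conditional mean, and threshold $K/\beta$.

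\textbf{Main obstacle.} The step I expect to need the most care is Step 1's predictability and filtration bookkeeping. First, the conditional law of $\pi_t$ must remain uniform over the unpurchased rows given everything in $\mathcal F_{t-1}$, including the cheap transcript and the advice. This needs the order to be drawn independently of those, and the thresholds fixed in advance. Second, the stake clipping must make every factor nonnegative at every $t$, including boundary values of $m_t$.
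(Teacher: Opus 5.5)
Your proposal is correct and follows essentially the same route as the paper's proof. Under the null, uniform without-replacement sampling gives $\E[Y_{\pi_t}(k)\mid\mathcal F_{t-1}]\ge m_t(k)$, so each expert's wealth and their fixed convex mixture are nonnegative unit-start supermartingales. Ville's inequality at $K/\delta$ (resp.\ $K/\beta$) with a union bound then gives the levels, the advice enters only through $\mathcal F_{t-1}$, and the exact closure bounds never certify an unsafe or reject a safe candidate.

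Two details are off, though neither affects validity:
\begin{itemize}
\item \emph{Stake bounds are swapped.} The certification factor $1+\lambda(m_t-y)$ needs $\lambda\le 1/(1-m_t)$; it is the rejection direction that needs $\lambda\le 1/m_t$. No truncation is required in either case. Because the forecasts are clipped to $[0,1]$, the Kelly stakes already satisfy $[m_t-\hat p]_+/\{m_t(1-m_t)\}\le 1/(1-m_t)$ and $[\hat p-m_t]_+/\{m_t(1-m_t)\}\le 1/m_t$.
\item \emph{Degenerate case has the wrong direction.} $m_t\le 0$ means $S_{t-1}\ge N\tau_k$, which confirms the certification null rather than refuting it; it is $m_t\ge 1$ that rules out an unsafe candidate. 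The fix, as in the paper, is simply to skip betting whenever $m_t\notin(0,1)$ and rely on the exact closure bounds.
\end{itemize}
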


\emph{Proof sketch.} Under the null, every predictable betting factor has conditional mean at most one, so each expert's wealth and their mixture are nonnegative supermartingales, and Ville's inequality with a union bound over candidates gives the levels; advice enters only through $\mathcal F_{t-1}$. Each assumption is necessary: nonuniform sampling or a stake that looks at the next row can give a betting factor mean above one (Appendix~\ref{app:theoryremarks}).

\begin{theorem}[Optimal fixed-mixture hedge]\label{thm:bestexpert}
For component e-processes $E_t^{\mathrm L},E_t^{\mathrm R}$ and fixed $w\in(0,1)$, $E_t^{\mathrm{mix}}=wE_t^{\mathrm L}+(1-w)E_t^{\mathrm R}$ is an e-process, and on every path
\[
    \log E_t^{\mathrm{mix}}\ \ge\ \max\big\{\log E_t^{\mathrm L}+\log w,\ \log E_t^{\mathrm R}+\log(1-w)\big\}.
\]
The worst-case log-evidence regret to the better component is $c(w)=\max\{-\log w,-\log(1-w)\}$, no smaller uniform constant is possible for that $w$, and $w=1/2$ uniquely minimizes $c(w)$ at $\log 2$.
\end{theorem}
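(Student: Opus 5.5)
The proof splits into four parts: the e-process property, the pathwise bound, tightness of the constant $c(w)$, and the minimization over $w$.

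\emph{E-process property.} Each component is an e-process for the same null and filtration. By the proof of Theorem~\ref{thm:validity}, each is in fact a nonnegative supermartingale with initial value one. Convex combinations with fixed weights preserve nonnegativity, the supermartingale inequality (by linearity of conditional expectation), and initial value $w\cdot1+(1-w)\cdot1=1$. So $E_t^{\mathrm{mix}}$ is a nonnegative supermartingale with initial value one, and Ville's inequality applies to it.

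If the components were only e-processes, meaning $\E[E_\sigma]\le1$ for every stopping time $\sigma$ under the null, linearity gives $\E[E^{\mathrm{mix}}_\sigma]=w\E[E^{\mathrm L}_\sigma]+(1-w)\E[E^{\mathrm R}_\sigma]\le1$ directly. Either way the first claim follows.

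\emph{Pathwise bound.} Both terms are nonnegative, so
\[
E_t^{\mathrm{mix}}\ge wE_t^{\mathrm L}\quad\text{and}\quad E_t^{\mathrm{mix}}\ge(1-w)E_t^{\mathrm R}.
\]
Taking logarithms (with $\log0=-\infty$ allowed) and the maximum gives the displayed inequality. Hence
\[
\max\{\log E_t^{\mathrm L},\log E_t^{\mathrm R}\}-\log E_t^{\mathrm{mix}}\le\max\{-\log w,-\log(1-w)\}=c(w).
\]
This holds because whichever component is larger, the regret to it is at most the corresponding $-\log$ weight.

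\emph{Tightness.} This is the only step needing care. We need paths on which the regret approaches $c(w)$. Suppose without loss of generality that $w\le1/2$, so $c(w)=-\log w$.

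Take a path on which the ledger expert's wealth grows to $E^{\mathrm L}_t=G$ while the robust expert's wealth is driven toward $0$. This is possible with the betting factors used here: a nonzero stake against a realized outcome can shrink wealth by a constant factor per step. More abstractly, take any pair of e-processes, e.g.\ one that bets aggressively and one that bets the opposite way on Bernoulli outcomes. Then
\[
\log E^{\mathrm L}_t-\log E^{\mathrm{mix}}_t=-\log\bigl(w+(1-w)E^{\mathrm R}_t/G\bigr)\to-\log w .
\]
So no constant below $c(w)$ can work uniformly.

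The main obstacle is making sure such paths exist within the admissible class. I would state tightness over all pairs of component e-processes, and exhibit an explicit two-step example with deterministic-stake betting martingales on $\{0,1\}$ outcomes in which one wealth is multiplied by $1+\lambda$ per step and the other by $1-\lambda$. Taking the supremum over paths gives exactly $-\log w$; the ratio only approaches this value, so the supremum is not attained, and the constant is sharp in the sense of a supremum.

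\emph{Minimization.} The function $-\log w$ is strictly decreasing in $w$, and $-\log(1-w)$ is strictly increasing. Their maximum is therefore strictly minimized where they cross, at $w=1/2$, giving $c(1/2)=\log2$. For any $w\ne1/2$, one of the two terms strictly exceeds $\log2$, which gives uniqueness.
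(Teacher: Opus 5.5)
Your proposal is correct and follows the paper's proof essentially step for step. It uses a convex combination of unit-start nonnegative supermartingales, the pathwise bounds $E_t^{\mathrm{mix}}\ge wE_t^{\mathrm L}$ and $E_t^{\mathrm{mix}}\ge(1-w)E_t^{\mathrm R}$, sharpness via paths with $E_t^{\mathrm R}/E_t^{\mathrm L}\to0$ (or the reverse), and the crossing argument at $w=1/2$. Your linearity-at-stopping-times remark is a small bonus, since it covers general e-processes as the statement literally assumes. One side remark is slightly off: the supremum \emph{is} attained when a component's wealth is exactly zero, e.g.\ a stake $\lambda=1$, or the clipped ledger expert ruined by a unit loss. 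So ``not attained'' holds only for strictly positive paths, which does not affect sharpness.
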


In words, the mixture never trails the better expert by more than a factor of two in evidence; in label units, it reaches any evidence level no later than the first expert reaches twice that level (Corollary~\ref{cor:envelope}, Appendix~\ref{app:deferred}).

\subsection{What history can and cannot buy: a canonical phase diagram}\label{sec:phase}

What does it cost \emph{not to know} whether historical residual information is still accurate? A canonical model answers this exactly. Trusted losses of the target block are i.i.d.\ $\mathrm{Ber}(p)$ with $p<m$, where $m=\tau$ is the certification boundary; this is the infinite-pool limit of Eq.~\ref{eq:mt}. The proxy has constant mean $\bar Q$, so $p=\bar Q+\mu_E^{\mathrm{tar}}$, and the ledger supplies $\mu_E^{\mathrm{hist}}$. The \emph{advice mismatch}
\begin{equation}\label{eq:eta}
    \eta=\mu_E^{\mathrm{tar}}-\mu_E^{\mathrm{hist}}
\end{equation}
turns the ledger forecast into $r=\bar Q+\mu_E^{\mathrm{hist}}=p-\eta$. Positive $\eta$ is optimistic stale advice (history understates current risk) and negative $\eta$ is pessimistic stale advice. The robust expert forecasts with the running mean of purchased losses, truncated below at a constant $\varepsilon\in(0,p]$ for the analysis.

\begin{proposition}[Transport-vigilance phase diagram]\label{prop:phase}
In the canonical model, almost surely:
\begin{enumerate}
    \item[(i)] the ledger expert's log evidence grows at rate $\lim_t t^{-1}\log E_t^{\mathrm L}=G_L(\eta)$, where
    \[
        G_L(\eta)=\begin{cases}
        0, & \eta\le\eta_-:=p-m \quad(\text{forecast at or above the boundary: no bet}),\\
        \kl(p\|m)-\kl(p\|p-\eta), & \eta_-<\eta<p,
        \end{cases}
    \]
    and $E_t^{\mathrm L}\to0$ if $\eta\ge p$ (a forecast clipped at zero is ruined by the first unit loss);
    \item[(ii)] $G_L$ is uniquely maximized at $\eta=0$ with value $\kl(p\|m)$, is positive exactly on $(\eta_-,\eta_+)$, where $\eta_+=p-r_\star\in(0,p)$ and $r_\star\in(0,p)$ is the unique solution of $\kl(p\|r_\star)=\kl(p\|m)$, and is negative for $\eta>\eta_+$;
    \item[(iii)] the robust expert's rate is $\lim_t t^{-1}\log E_t^{\mathrm R}=\kl(p\|m)$ for every $\eta$;
    \item[(iv)] the portfolio's rate is $\lim_t t^{-1}\log E_t^{\mathrm{mix}}=\max\{G_L(\eta),\kl(p\|m)\}=\kl(p\|m)$.
\end{enumerate}
\end{proposition}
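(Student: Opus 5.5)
The plan is to reduce each claim to a strong law of large numbers for the log-wealth increments. In the canonical model the boundary is fixed at $m=\tau$ and the losses are i.i.d.\ $Y_t\sim\mathrm{Ber}(p)$. For a forecast $r$, the certification-direction Kelly stake is
\[
\lambda(r)=\frac{[m-r]_+}{m(1-m)} .
\]
When $0<r<m$ the betting factor has a simple form on each outcome. On $Y=1$ it equals $1-\lambda(r)(1-m)=r/m$, and on $Y=0$ it equals $1+\lambda(r)m=(1-r)/(1-m)$. Hence
\[
\log E_t^{\mathrm L}=\sum_{s\le t}\Big[Y_s\log\tfrac{r}{m}+(1-Y_s)\log\tfrac{1-r}{1-m}\Big].
\]
By the SLLN, $t^{-1}\log E_t^{\mathrm L}$ converges almost surely to
\[
p\log\tfrac rm+(1-p)\log\tfrac{1-r}{1-m}=\kl(p\|m)-\kl(p\|r).
\]
Substituting $r=p-\eta$ gives the interior branch of (i).

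The two boundary cases of (i) are handled directly. If $\eta\le p-m$, then $r\ge m$, the stake is zero and the wealth stays at one, so the rate is $0$. If $\eta\ge p$, the forecast is clipped to $r=0$ and the stake becomes $1/(1-m)$. The factor is then zero on $Y=1$, and since $p>0$ the first unit loss arrives almost surely, giving $E_t^{\mathrm L}\to0$.

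For (ii), I would study $f(r)=\kl(p\|m)-\kl(p\|r)$ on $r\in(0,m)$.
\begin{itemize}
\item Since $r\mapsto\kl(p\|r)$ is strictly convex with its unique zero at $r=p<m$, $f$ is uniquely maximized at $r=p$ (that is, $\eta=0$), where it equals $\kl(p\|m)$.
\item On $(p,m)$, $\kl(p\|r)$ is increasing and stays below $\kl(p\|m)$, so $f>0$ there. This gives positivity for $\eta\in(\eta_-,0]$, with $f\to0$ as $r\uparrow m$.
\item On $(0,p)$, $\kl(p\|r)$ decreases strictly from $+\infty$ to $0$. So there is a unique $r_\star\in(0,p)$ with $\kl(p\|r_\star)=\kl(p\|m)$, and $f>0$ exactly for $r>r_\star$ while $f<0$ for $r<r_\star$.
\end{itemize}
Translating back through $\eta=p-r$ yields the interval $(\eta_-,\eta_+)$ and negativity for $\eta>\eta_+$. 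For $\eta\ge p$ the rate is $-\infty$, or more precisely the wealth is eventually zero, which is consistent with "negative".

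For (iii), the robust forecast is $\hat p_t=\max\{\varepsilon,\bar Y_{t-1}\}$. It is independent of $\eta$, since it never uses the ledger, and it converges almost surely to $p$. The log-increment is $g(\hat p_t,Y_t)$, where
\[
g(r,y)=y\log(r/m)+(1-y)\log\tfrac{1-r}{1-m}.
\]
Two facts matter here. First, the truncation at $\varepsilon>0$ keeps $\log(\hat p_t/m)$ bounded, so no ruin occurs. Second, $g$ is uniformly Lipschitz in $r$ on $[\varepsilon,m']$ with $m'<1$. For the early regime where $\hat p_t\ge m$, the stake is zero and the increment is $0$, so it too is bounded. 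I would then decompose
\[
t^{-1}\sum_s g(\hat p_s,Y_s)=t^{-1}\sum_s g(p,Y_s)+t^{-1}\sum_s\big[g(\hat p_s,Y_s)-g(p,Y_s)\big].
\]
The first term converges to $\kl(p\|m)$ by the SLLN. The second is bounded by $L\,t^{-1}\sum_s|\hat p_s-p|$, which tends to $0$ by Cesàro averaging of an almost surely convergent sequence. This step is the main technical obstacle, since the forecast is predictable but random. The plan is to make the uniform bound on $|\partial_r g|$ rigorous once $\hat p_s$ is eventually inside $[\varepsilon,(p+m)/2]$, and to treat the finitely many earlier steps as bounded.

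For (iv), Theorem~\ref{thm:bestexpert} gives a lower bound on the mixture. Combined with the elementary upper bound $\log(a+b)\le\log2+\max\{\log a,\log b\}$, it gives
\[
\max\{\log E^{\mathrm L}_t,\log E^{\mathrm R}_t\}-\log2\le\log E^{\mathrm{mix}}_t\le\max\{\log E^{\mathrm L}_t,\log E^{\mathrm R}_t\}.
\]
Dividing by $t$ and using (i) and (iii), the mixture rate is $\max\{G_L(\eta),\kl(p\|m)\}$. This equals $\kl(p\|m)$ because $G_L\le\kl(p\|m)$ by (ii). When $E^{\mathrm L}_t=0$, the bound is driven by the robust expert alone.
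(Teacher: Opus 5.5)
Your proposal is correct and follows essentially the same route as the paper: the strong law with the explicit Kelly factors $r/m$ and $(1-r)/(1-m)$ for (i), monotonicity of $r\mapsto\kl(p\|r)$ on either side of $p$ for (ii), a Ces\`aro argument on the almost surely convergent truncated forecast for (iii), and the pathwise sandwich $\tfrac12\max\{E_t^{\mathrm L},E_t^{\mathrm R}\}\le E_t^{\mathrm{mix}}\le\max\{E_t^{\mathrm L},E_t^{\mathrm R}\}$ for (iv). The only cosmetic difference is in (iii), where you use a Lipschitz bound once $\hat p_s$ stays below $(p+m)/2$, while the paper uses a modulus of uniform continuity of the log-increment on all of $[\varepsilon,1]$.
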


History therefore shortens a decision but never raises the evidence rate; pessimistic stale advice costs the ledger only power, optimistic stale advice beyond $\eta_+$ makes its evidence decay, and the portfolio keeps the vigilance rate for every $\eta$ (Figure~\ref{fig:phase}). Item (i) follows from an exact one-step identity for the ledger's log growth (Theorem~\ref{thm:mismatch} in Appendix~\ref{app:theoryremarks}). The price of not knowing $\eta$ is explicit: the portfolio needs at most $\log2/G_L(\eta)$ more expected labels than the ledger's bound, and at $\eta=0$ its bound matches the lower bound of Theorem~\ref{thm:vigilance} to first order in $\log(1/\delta)$ (Corollary~\ref{cor:hedgeprice}, Appendix~\ref{app:deferred}).

\begin{figure}[t]
\centering
\includegraphics[width=\textwidth]{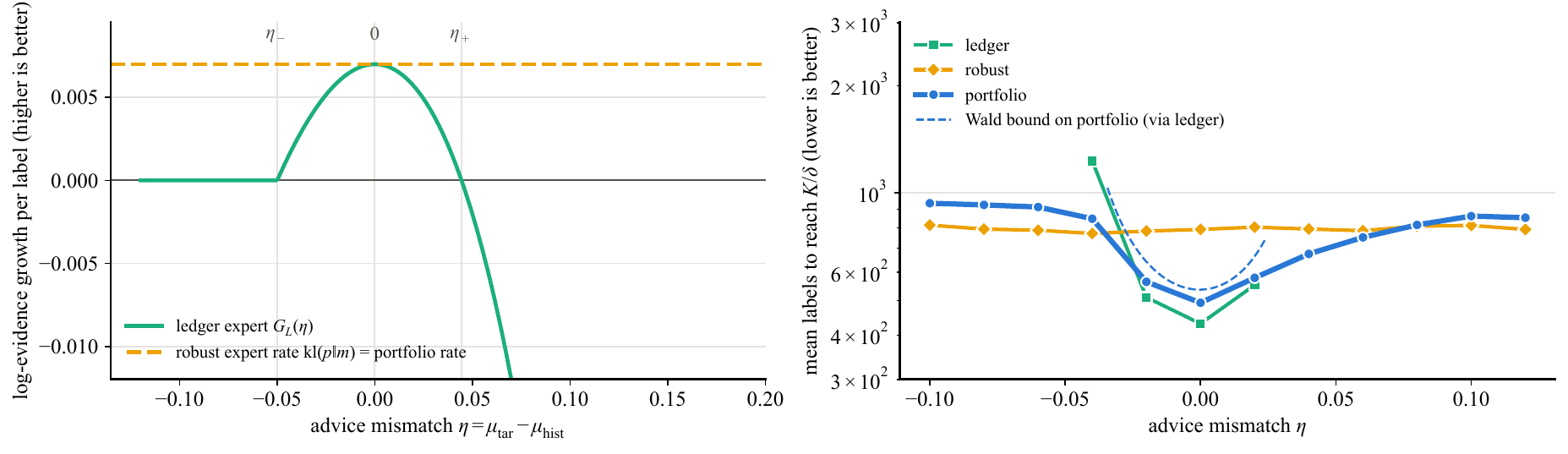}
\caption{Canonical phase diagram ($p=0.20$, $m=0.25$, $K=1$, $\delta=0.05$). \textbf{Left (exact):} ledger growth $G_L(\eta)$ against the flat robust and portfolio rate $\kl(p\|m)$. \textbf{Right (simulation, 2{,}000 paths per point):} mean labels to reach $K/\delta$; ledger points are omitted where over 1\% of paths fail within 6{,}000 labels, and the dashed curve is the bound of Corollary~\ref{cor:hedgeprice}.}
\label{fig:phase}
\end{figure}

\section{Experimental design}\label{sec:experiments}

\paragraph{Protocol.} To keep analysis choices independent of the results, we fix everything before analyzing the final roles: five random sampling orders (seeds), block-level $\delta=0.05$ and $\beta=0.10$, budgets from $1\%$ to $100\%$ of each block, thresholds of ledger risk plus $0.05$, and the comparators. All methods share pools, orders, and budgets, and development uses separate datasets.

\paragraph{Data.} DICES-990 contains 72{,}103 safety ratings of 990 conversations \citep{aroyo2023dices}; we audit four outcome-level candidates (harmful content, unfair bias, misinformation, and overall safety) on 21{,}702 held-out rating rows in 12 locale-by-hash blocks, with a term-frequency inverse-document-frequency (TF-IDF) logistic proxy. CIFAR-10N provides three human annotations per CIFAR-10 training image \citep{wei2022cifarn}; the loss is disagreement with the canonical class, and 14{,}767 held-out images form six blocks with one candidate each. Both are split by hash into proxy-training, ledger, calibration, and final roles (Appendix~\ref{app:algorithms}), and each final-role block is one target pool.

\paragraph{Comparators and statistics.} We compare the portfolio with its two components (ledger-only and robust-only), fresh without-replacement (WoR) betting, a selective residual confidence sequence (CS) \citep{ao2026bestarm}, passive fixed-budget sampling, and a PPRM-style conjugate-mixture empirical-Bernstein (CM-EB) monitor. The last keeps the prediction-plus-residual estimator of \citet{zhang2026pprm} but is adapted to our fixed-pool estimand; we do not claim it reproduces the original procedure exactly (Appendix~\ref{app:algorithms}). The primary metric is the paired geometric-mean portfolio/PPRM label ratio at full budget, with 95\% hierarchical bootstrap intervals over blocks and then seeds, and the area under the correct-resolution-versus-budget curve (AUC). In tables \up{} means higher is better and \down{} lower is better.

\paragraph{Post-confirmatory repeated-judge study.} Designed after the primary protocol was fixed, this study replaces the cheap proxy with a legacy rubric (Judge A) and an updated contextual-risk rubric (Judge B) of one fixed LLM, so it isolates evaluator-configuration drift rather than a model update. Each rubric judges every item three times, on all 990 DICES conversations and all 5{,}654 human-annotated ToxicChat-0124 examples \citep{lin2023toxicchat}; Judge A forms the ledger and Judge B the current proxy in $A\!\to\!B$. The analysis uses the five prespecified primary orders rather than the 50 planned in the written post-confirmatory protocol, averages them within block before $10{,}000$ block-bootstrap replicates, and never treats them as independent units (Appendix~\ref{app:llmshift}).

\section{Results}\label{sec:results}

\subsection{Held-out natural data}\label{sec:primary}

Table~\ref{tab:primary} gives the full-budget outcomes. The paired portfolio/PPRM label ratio is $0.465$ (95\% CI $[0.327,0.575]$) on CIFAR-10N and $0.740$ ($[0.618,0.877]$) on DICES-990, a reduction of $53.5\%$ and $26.0\%$. The ledger-only expert is slightly cheaper because these data favor historical advice; the portfolio pays a small hedge for not knowing this, and beats every other deployable baseline. When history is stale the order reverses: after an LLM judge's rubric changes, the portfolio needs fewer labels than ledger-only on DICES-990 and ToxicChat (Section~\ref{sec:sensitivity}), and under biased advice ledger-only needs up to $1.66$ times the labels of the cheaper component (Section~\ref{sec:hedge}). The portfolio makes no false certification; its one DICES false rejection, shared by the robust-only expert, is a safe candidate rejected early in one block. Zero observed false certifications are an empirical observation, not evidence of zero error probability; validity follows from the stated assumptions and procedure (Theorem~\ref{thm:validity}).

\begin{table}[t]
\caption{Full-budget results on the held-out final roles. Ledger-only and robust-only are the portfolio's components. FC/FR: false certifications/rejections over 30 (CIFAR-10N) and 240 (DICES-990) decisions; desc.: descriptive. Labels are mean $\pm$ standard deviation (SD) over block-seed pairs; comparisons use the paired ratios in the text. Bold marks the lowest labels and highest AUC within each dataset. Selective residual CS buys labels on the passive schedule but resolves nothing before full budget, so its AUC is only the last budget interval, $0.100$.}
\label{tab:primary}
\small
\setlength{\tabcolsep}{4pt}
\begin{center}
\begin{tabular}{llrrrrrr}
\toprule
Dataset & Method & Labels \down & Resolution \up & Release (desc.) & FC \down & FR \down & AUC \up \\
\midrule
CIFAR-10N & \textbf{Portfolio vigilance} & $279.9\pm171.0$ & $100.00\%$ & $100.00\%$ & $0$ & $0$ & $0.880$ \\
& Ledger-only (component) & $\mathbf{248.9}\pm166.5$ & $100.00\%$ & $100.00\%$ & $0$ & $0$ & $\mathbf{0.894}$ \\
& Robust-only (component) & $967.8\pm675.5$ & $100.00\%$ & $100.00\%$ & $0$ & $0$ & $0.602$ \\
& PPRM-style CM-EB & $574.6\pm325.7$ & $100.00\%$ & $100.00\%$ & $0$ & $0$ & $0.743$ \\
& Fresh WoR betting & $1{,}492.6\pm438.0$ & $100.00\%$ & $100.00\%$ & $0$ & $0$ & $0.400$ \\
& Passive fixed budget & $2{,}461.2\pm39.0$ & $100.00\%$ & $100.00\%$ & $0$ & $0$ & $0.840$ \\
& Selective residual CS & $2{,}461.2\pm39.0$ & $100.00\%$ & $100.00\%$ & $0$ & $0$ & $0.100$ \\
\midrule
DICES-990 & \textbf{Portfolio vigilance} & $1{,}182.2\pm678.2$ & $99.58\%$ & $72.50\%$ & $0$ & $1$ & $\mathbf{0.569}$ \\
& Ledger-only (component) & $\mathbf{1{,}181.4}\pm689.3$ & $100.00\%$ & $72.92\%$ & $0$ & $0$ & $0.534$ \\
& Robust-only (component) & $1{,}368.8\pm527.8$ & $99.58\%$ & $72.50\%$ & $0$ & $1$ & $0.501$ \\
& PPRM-style CM-EB & $1{,}398.3\pm542.0$ & $100.00\%$ & $72.92\%$ & $0$ & $0$ & $0.402$ \\
& Fresh WoR betting & $1{,}586.1\pm295.1$ & $99.17\%$ & $72.92\%$ & $1$ & $1$ & $0.316$ \\
& Passive fixed budget & $1{,}808.5\pm176.6$ & $100.00\%$ & $72.92\%$ & $0$ & $0$ & $0.521$ \\
& Selective residual CS & $1{,}808.5\pm176.6$ & $100.00\%$ & $72.92\%$ & $0$ & $0$ & $0.100$ \\
\bottomrule
\end{tabular}
\end{center}
\end{table}

The advantage holds across the budget frontier (Figure~\ref{fig:frontier} in Appendix~\ref{app:primaryextra}): resolution AUC is $0.880$ against $0.743$ on CIFAR-10N and $0.569$ against $0.402$ on DICES-990, and at 20\% of the budget the portfolio resolves $90\%$ of CIFAR-10N decisions against $40\%$ and $34.58\%$ of DICES-990 decisions against $13.75\%$. The DICES gain varies across blocks, from $0.422$ to $0.999$ (Appendix~\ref{app:primaryextra}).

\subsection{External blocks and corrupted advice}\label{sec:hedge}

On six external blocks from CIFAR-10 with controlled noise, SVHN \citep{krizhevsky2009cifar,netzer2011svhn}, and UCI Naval Propulsion \citep{coraddu2016naval}, the unchanged portfolio uses fewer labels than the PPRM-style monitor on all six (ratios $0.296$ to $0.916$), with no false certification in any study (Appendix~\ref{app:primaryextra}).

Because the natural data favor accurate history, the hedge is best tested where history is wrong. We therefore bias the historical advice on synthetic finite pools, holding everything else fixed. With exact advice the ledger-only, portfolio, and robust-only experts use $353.4$, $381.7$, and $594.5$ labels; under a $-0.15$ bias they use $986.7$, $624.5$, and $594.5$. Across seven biases the portfolio stays within $8.0\%$ of the cheaper component, while the ledger-only expert reaches $1.66$ times it (Appendix~\ref{app:stress}), which is the finite-pool counterpart of Figure~\ref{fig:phase}.

\subsection{Sensitivity and evaluator-shift experiments}\label{sec:sensitivity}

\paragraph{Sensitivity.} The advantage does not hinge on the prespecified threshold: margins from $0.02$ to $0.10$ keep the ratio between $0.403$ and $0.465$ on CIFAR-10N and between $0.649$ and $0.740$ on DICES-990 (Appendix~\ref{app:naturalsensitivity}).

\paragraph{LLM evaluator shift.} The post-confirmatory repeated-judge study (Section~\ref{sec:experiments}) tests the motivating mechanism directly. First, the rubric change is real rather than judge noise: the mean absolute score change between rubrics exceeds that within a rubric by $0.0476$ (95\% CI $[0.0331,0.0633]$) on DICES-990 final items and by $0.0379$ ($[0.0301,0.0460]$) on ToxicChat. Second, after the change (Table~\ref{tab:repeatedshift}) the portfolio needs $0.790$ ($[0.667,0.909]$) and $0.631$ ($[0.520,0.770]$) times the labels of the PPRM-style monitor, with no false certification. Third, unlike on the stable primary data, stale history now makes the ledger expert the costlier one: the portfolio needs $140.7$ fewer labels than ledger-only on DICES-990 ($[-256.2,-46.2]$ for the difference) and $12.7$ fewer on ToxicChat ($[-20.8,-5.9]$), and it is cheaper than ledger-only in all nine pairings of an A run with a B run on both datasets. This is the hedge that \Mtwo{} predicts. The portfolio also beats robust-only on DICES-990 ($-109.1$, $[-199.3,-32.6]$); on ToxicChat the two tie ($137.6$ against $137.5$ labels; difference $+0.05$, $[-4.65,4.05]$), so we claim no advantage there. Charging the full block for every error or unresolved decision leaves these conclusions unchanged. For comparison, the calibration-based endpoint test Noisy-but-Valid \citep{feng2026noisy} makes no false certification but also certifies none of the eight safe endpoint cases (Appendix~\ref{app:llmshift}). On DICES-990 the ratio to PPRM is no smaller than in the primary study ($0.740$): the shift shows that the hedge protects the portfolio, not that it widens its lead over PPRM. An initial single-run study on DICES-990 gives the same pattern (ratio $0.779$, $[0.629,0.924]$; Appendix~\ref{app:llmshift}).

\begin{table}[t]
\caption{Repeated-judge study under the shifted $A\!\to\!B$ condition at full budget. Labels are means over blocks and the five random orders. P/P: portfolio/PPRM-style label ratio with its block-bootstrap 95\% interval. Shift excess: between-rubric minus within-rubric mean absolute judge-score change on final items, with its item-bootstrap 95\% interval. The portfolio makes no false certification on either dataset.}
\label{tab:repeatedshift}
\small
\setlength{\tabcolsep}{3pt}
\begin{center}
\begin{tabular}{lrrrrll}
\toprule
Dataset & Portfolio \down & Ledger-only \down & Robust-only \down & PPRM \down & P/P ratio \down & Shift excess \\
\midrule
DICES-990 & $\mathbf{1{,}305.2}$ & $1{,}445.9$ & $1{,}414.2$ & $1{,}544.3$ & $0.790$ $[0.667,0.909]$ & $0.0476$ $[0.0331,0.0633]$ \\
ToxicChat & $137.6$ & $150.2$ & $\mathbf{137.5}$ & $208.5$ & $0.631$ $[0.520,0.770]$ & $0.0379$ $[0.0301,0.0460]$ \\
\bottomrule
\end{tabular}
\end{center}
\end{table}

\paragraph{The price of trusted transport.} The repeated-judge data also illustrate \Mone{} (Appendix~\ref{app:rho}): at $\rho=0$ a zero-label certificate would cover $81\%$ of DICES-990 cells under rubric A although the bridge holds for only $35\%$, and the smallest radius at which the bridge holds everywhere leaves at most $2\%$ certifiable, so a bridge wide enough to be true removes the zero-label advantage.

\section{Limitations}\label{sec:limitations}

The lower bound concerns i.i.d.\ Bernoulli draws rather than finite pools sampled without replacement, and in a scaling study the portfolio uses four to five times the bound (Appendix~\ref{app:stress}); Proposition~\ref{prop:phase} and Corollary~\ref{cor:hedgeprice} hold only in the canonical model. Validity rests on the declared design of bounded losses, prespecified thresholds, and uniform sampling: if audited rows are chosen by convenience, a false certificate can release a harmful update whose cost falls on its users. Error control is block-level. On the empirical side, the PPRM-style monitor is our adaptation of the original method. The LLM studies change the rubric of one fixed model, not the model itself, and were designed after the primary protocol was fixed; the repeated-judge study uses the five primary random orders rather than the 50 its written protocol planned, and one of its six comparisons (portfolio against robust-only on ToxicChat) is a tie (Appendix~\ref{app:llmshift}). The primary label counts do not penalize incorrect decisions, although a penalized cost leaves the repeated-judge conclusions unchanged.

\section{Conclusion}\label{sec:conclusion}

Historical evaluator evidence has two statistically different roles. If a trusted transport condition connects an old audit to the present, the audit may constrain current risk and replace fresh labels, at an error cost that should be declared; if not, no label-free test can tell whether the connection still holds, and honest certification must keep buying current evidence at a rate we characterize. In the second regime history can still shorten a decision, but it cannot raise the rate at which evidence accumulates, and stale history can destroy it. Portfolio vigilance pays a small hedge when history is reliable and avoids the larger loss when it is stale: it needs markedly fewer trusted labels than a matched prediction-powered monitor and, when an LLM judge's rubric changes, fewer than trusting history alone, consistent with the \Mtwo{} mechanism. A release process should therefore state, for every certificate, whether historical evidence entered its validity or only its efficiency. Open questions are how to certify transport itself with few labels when the bridge is only partially testable, and the price of vigilance for finite pools sampled without replacement.

\subsection*{AI use statement}

The authors use generative AI tools to assist with literature search and summarization, methodological critique, software design, code drafting and debugging, interpretation of statistical results, figure and table preparation, and manuscript drafting and editing. The authors check every AI-assisted mathematical and empirical claim against the code, the stored results, the primary literature, and reproducible analyses. The authors take responsibility for the final text, proofs, code, and reported results.

\subsection*{Ethics statement}

All experiments use public datasets under their published terms. DICES contains sensitive conversational safety content and socially situated human judgments \citep{aroyo2023dices}, CIFAR-10N contains crowd annotations \citep{wei2022cifarn}, and ToxicChat contains real user prompts, some of them toxic, with human toxicity labels \citep{lin2023toxicchat}. We report only declared statistical losses and do not treat demographic disagreement as a single objective notion of safety; the locale heterogeneity in Section~\ref{sec:primary} is descriptive. No model or threshold in this paper should be read as a recommendation about people. Results are benchmark measurements, not safety, clinical, or fairness guarantees. The study collects no new human-subject data.

\subsection*{Reproducibility statement}

The appendix contains complete proofs, the full per-method and per-block result tables, and the design of every study, including the data roles, random orders, budgets, thresholds, and comparators. The primary protocol is fixed before any analysis of the final roles, and every analysis designed afterwards is marked as additional or post-confirmatory. The supplementary material contains the method and baseline code, the deterministic role construction, the frozen LLM-judge outputs, the per-block results, and scripts that regenerate every table and figure. All experiments run on CPUs.

\bibliography{iclr2027_conference}
\bibliographystyle{iclr2027_conference}

\appendix

The appendix follows the order of the paper. Appendix~\ref{app:notation} summarizes the notation and Appendix~\ref{app:related} extends the related work. Appendix~\ref{app:deferred} gives the pseudocode and the corollaries stated in brief in the main text, Appendix~\ref{app:proofs} proves every result of Sections~\ref{sec:vigilance} and~\ref{sec:method}, Appendix~\ref{app:matching} gives the matching sequential certifier and the cost of deterministic label caps, and Appendix~\ref{app:theoryremarks} discusses the scope of the theory. Appendix~\ref{app:algorithms} describes the data roles and the comparator. Appendices~\ref{app:primaryextra} to~\ref{app:llmshift} report additional results in the order of Section~\ref{sec:results}: per-block and external results, corrupted advice and scaling, natural-data sensitivity, and the single-run and repeated-judge LLM evaluator-configuration studies. Appendix~\ref{app:hiddenfailure} reports the Folktables hidden-failure study of unverified transport cited in Section~\ref{sec:vigilance}.

\section{Notation}\label{app:notation}

Table~\ref{tab:notation} summarizes the notation used throughout the paper, grouped in the order in which the symbols appear.

\begin{table}[h]
\caption{Notation.}
\label{tab:notation}
\small
\begin{center}
\begin{tabular}{p{1.45in}p{3.9in}}
\toprule
Symbol & Meaning \\
\midrule
\Mzero, \Mone, \Mtwo & Hidden residual drift; trusted transport; untrusted advice \\
$N$, $K$ & Rows in the finite target pool; number of prespecified candidate claims \\
$Y_i(k)$, $Q_i(k)$ & Trusted loss and cheap prediction of it for row $i$ and candidate $k$, both in $[0,1]$ \\
$E_i(k)$ & Evaluator residual $Y_i(k)-Q_i(k)$ \\
$R(k)$, $\tau_k$ & Current finite-population risk of candidate $k$; its release threshold \\
$\delta$, $\beta$ & False-certification and false-rejection levels \\
$\mu_E^{\mathrm{hist}}(k)$, $\mu_E^{\mathrm{tar}}(k)$ & Historical and current (target) mean residual \\
$\bar A(k)$ & Historical residual advice supplied by the ledger \\
$\bar Q(k)$, $\overline Q_{\mathrm{tar}}(k)$ & All-pool mean of the cheap prediction \\
$U_H(k)$, $\delta_H$ & Historical upper bound on $\mu_E^{\mathrm{hist}}(k)$ and its error probability \\
$\rho_k$, $\delta_T$ & Trusted-transport (bridge) radius and its error probability \\
$s_k$, $\mathcal C(\rho)$ & Zero-label slack $\tau_k-\overline Q_{\mathrm{tar}}(k)-U_H(k)$; candidates certifiable with zero labels \\
$\Psi$ & Any test measurable with respect to the cheap transcript \\
$M$ & Monitored coordinates in the lower-bound model (e.g.\ blocks times strata) \\
$q_0$, $q_1$ & Safe and unsafe Bernoulli trusted-loss means in the lower-bound submodel \\
$P_0$, $P_i$ & Stationary safe law; law with coordinate $i$ made unsafe \\
$C_i$, $N_i$ & Certification indicator and number of labels queried at coordinate $i$ \\
$\kl(a\|b)$ & Binary relative entropy \\
$\pi_t$, $\mathcal F_{t-1}$ & Row purchased at step $t$; information available before draw $t$ \\
$S_{t-1}(k)$, $m_t(k)$ & Cumulative purchased trusted loss; null-boundary mean of the unpurchased rows \\
$\hat p^{\mathrm L}(k)$, $\hat p_t^{\mathrm R}(k)$ & Ledger-expert and robust-expert forecasts \\
$\lambda_t^{e,\pm}(k)$ & Kelly stake of expert $e$ in the certification ($-$) or rejection ($+$) direction \\
$E_t^{\mathrm L,\pm}$, $E_t^{\mathrm R,\pm}$ & Wealth (e-process) of the ledger and robust experts \\
$E_t^{\pm}$, $E_t^{\mathrm{mix}}$ & Portfolio evidence, the fixed mixture of the two experts \\
$w$, $c(w)$ & Ledger mixture weight; worst-case log-evidence regret to the better expert \\
$A$, $T_e(A)$ & Evidence level; first purchased-label time at which expert $e$ reaches $A$ \\
$p$, $m$ & Current trusted-loss mean and certification boundary in the canonical model \\
$r$, $B_r(Y)$ & Ledger forecast $p-\eta$; one-step Kelly factor with forecast $r$ \\
$\eta$ & Advice mismatch $\mu_E^{\mathrm{tar}}-\mu_E^{\mathrm{hist}}$ \\
$\eta_-$, $\eta_+$, $r_\star$ & Phase boundaries $p-m$ and $p-r_\star$; solution of $\kl(p\|r_\star)=\kl(p\|m)$ below $p$ \\
$G_L(\eta)$ & Ledger expert's expected log-evidence growth per label \\
$\varepsilon$, $c_\eta$ & Truncation of the robust forecast; largest ledger log-wealth increment \\
\bottomrule
\end{tabular}
\end{center}
\end{table}

\section{Extended related work}\label{app:related}

Table~\ref{tab:positioning} compares the paper with the closest prior results at the level of their guarantees. Prediction-powered e-values give prediction-assisted counterparts of e-value procedures \citep{csillag2025ppevalues}, and simulator-guided betting builds anytime-valid certificates whose validity holds for any simulator bank while trust shifts across a portfolio of simulators \citep{chen2026sim2real}, so using untrusted side information, or a portfolio over its sources, to choose bets is not new. Noisy-but-Valid estimates a current judge's true- and false-positive rates from human calibration labels and propagates that uncertainty into a valid test \citep{feng2026noisy}; its problem is judge imperfection, whereas ours is whether residual evidence from a historical judge or population still describes the current one. Disagreement-based auditing proves matching label complexity from observable model disagreement \citep{balachandran2026disagreement}, whereas the hidden residual changes we study may leave no trace in any cheap observable.

\begin{table}[t]
\caption{Positioning against the closest prior results; guarantees are not numerically comparable.}
\label{tab:positioning}
\small
\begin{center}
\begin{tabular}{p{1.3in}p{1.85in}p{1.95in}}
\toprule
Work & Guarantee & What this paper adds \\
\midrule
\citet{zhang2026pprm} & Anytime monitoring of prediction-powered running-average risk under shift. & Current-risk release decisions and the status of a historical residual ledger. \\
\citet{feng2026noisy} & Valid endpoint tests with an imperfect current judge. & Whether historical residuals may enter validity after the judge changes. \\
\citet{chen2026sim2real} & Anytime-valid confidence sequence for a mean from i.i.d.\ real outcomes, valid for any simulator bank; exponential-weights trust combines simulator forecasts into one bet, with wealth-regret bounds against the best fixed simulator. & A fixed mixture of the wealths of a history-guided and a label-only expert, within $\log2$ evidence of the better one (minimax constant), a stale-advice growth identity, and finite-pool release decisions. \\
\citet{xu2024active} & Side information guides betting without entering validity. & A two-expert fixed mixture with an optimal $\log2$ hedge and a stale-advice growth identity. \\
\citet{csillag2025ppevalues} & Prediction-powered counterparts of e-value procedures. & Historical, not current, prediction residuals, and when they may enter validity. \\
\citet{balachandran2026disagreement} & Matching label complexity for update auditing from observable disagreement. & A rate driven by residual transport and its untestable part, not by disagreement. \\
\citet{mazzetto2023drift} & Latest-distribution estimation under unknown drift. & Selective label counting, a release certificate, and a cost charged on a stationary path. \\
\citet{low1997}; \citet{cailow2004} & Non-adaptivity of honest confidence intervals. & A sequential selective-label vigilance rate and a transport condition that escapes it. \\
\citet{doula2026transfer}; \citet{klivans2024testable} & Transfer or learning under shift with a surrogate or testable condition. & Impossibility of label-free validation of the residual-transport contract. \\
\bottomrule
\end{tabular}
\end{center}
\end{table}

\section{Algorithm and deferred statements}\label{app:deferred}

This appendix collects the pseudocode and three corollaries stated in brief in the main text; their proofs are in Appendix~\ref{app:proofs}.

\begin{algorithm}[h]
\caption{Two-sided portfolio vigilance for one finite target block}
\label{alg:portfolio}
\small
\begin{algorithmic}[1]
\REQUIRE pool of $N$ rows with cheap scores $Q_i(k)$; thresholds $\tau_k$; fixed advice $\bar A(k)$; levels $\delta,\beta$; label budget $B$
\ENSURE a decision in \{certify, reject, abstain\} for each of the $K$ candidates
\STATE Draw a uniform random order $\pi$ of the pool; set all expert wealths to $1$
\FOR{$t=1,\dots,B$}
    \IF{every candidate is resolved} \STATE \textbf{stop} \ENDIF
    \FOR{each unresolved candidate $k$}
        \STATE compute $m_t(k)$ by Eq.~\ref{eq:mt}, forecasts $\hat p^{\mathrm L}(k)$ and $\hat p_t^{\mathrm R}(k)$ from $\mathcal F_{t-1}$, and the four directional Kelly stakes \COMMENT{predictable: never uses row $\pi_t$}
    \ENDFOR
    \STATE purchase $Y_{\pi_t}(k)$ for every candidate defined on row $\pi_t$ \COMMENT{one label updates all candidates}
    \STATE update the four expert wealths of each unresolved candidate and form $E_t^{\pm}(k)$ by Eq.~\ref{eq:mixture}
    \STATE certify $k$ if $E_t^-(k)\ge K/\delta$ or $(S_t+N-t)/N\le\tau_k$; reject $k$ if $E_t^+(k)\ge K/\beta$ or $S_t/N>\tau_k$
\ENDFOR
\STATE report unresolved candidates as abstentions
\end{algorithmic}
\end{algorithm}

\begin{corollary}[No free adaptation to an unverified bridge]\label{cor:nofreebridge}
If a safe world and an unsafe \Mzero{} world induce the same pre-label transcript, any procedure that is uniformly $\delta$-valid over \Mzero{} and decides from that transcript to issue a zero-label transport certificate does so in the safe world with probability at most $\delta$.
\end{corollary}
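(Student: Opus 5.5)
The plan is to reduce the corollary to the indistinguishability argument of Proposition~\ref{prop:nolabelfree}, followed by a one-line use of validity. Write $P_{\mathrm{safe}}$ and $P_{\mathrm{unsafe}}$ for the laws of the two worlds; the unsafe world lies in \Mzero{}. Let $\mathcal T$ be the pre-label transcript, meaning everything the procedure sees before any trusted outcome is bought: the cheap scores of the whole pool, the fixed ledger advice, and any internal randomization. Let $D=\{\text{a zero-label transport certificate is issued}\}$. By hypothesis, the decision to issue such a certificate is made from $\mathcal T$ alone. So $\mathbf 1_D=\phi(\mathcal T,U)$ for some measurable $\phi$, where $U$ is auxiliary randomness independent of the data with the same law in both worlds. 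The argument then has three steps.

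\textbf{Step 1 (equal probabilities).} The hypothesis says $\mathcal T$ has the same law under $P_{\mathrm{safe}}$ and $P_{\mathrm{unsafe}}$. Since $U$ is independent with a common law, $(\mathcal T,U)$ also has a common law, and hence
\[
P_{\mathrm{safe}}(D)=P_{\mathrm{unsafe}}(D).
\]
This is exactly the computation behind Proposition~\ref{prop:nolabelfree}, with the transport-validity test $\Psi$ replaced by the certificate indicator $\phi$. I would cite that proposition rather than redo it.

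\textbf{Step 2 (the certificate is a false certification in the unsafe world).} A zero-label certificate declares some candidate safe without buying any current label. In the unsafe world that candidate violates its threshold, so the event $D$ is contained in the event of certifying an unsafe candidate. Uniform $\delta$-validity over \Mzero{} then gives $P_{\mathrm{unsafe}}(D)\le\delta$.

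\textbf{Step 3 (conclude).} Combining Steps 1 and 2 gives $P_{\mathrm{safe}}(D)=P_{\mathrm{unsafe}}(D)\le\delta$. This is the claim: a procedure cannot learn from the cheap transcript that the \Mone{} bridge holds and then collect the zero-label benefit of Theorem~\ref{thm:trustedtransport}(iii) with probability above $\delta$.

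The only delicate point is Step 2. I must make sure the certificate refers to a candidate that is unsafe in the unsafe world, not to some other candidate that happens to be safe there. The cleanest fix is to build the pair of worlds so that they differ only in the residual law of the certified candidate; the Bernoulli submodel of Section~\ref{sec:vigilance} does this, with $P_0$ versus $P_i$. If the certificate could name different candidates, I would apply the argument separately to each candidate $k$: use the event $D_k$ and a world in which $k$ alone is made unsafe, giving $P_{\mathrm{safe}}(D_k)\le\delta$ for each $k$. I would state the corollary per candidate, since a union over candidates could cost a factor of $K$.
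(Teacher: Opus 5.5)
Your proposal is correct and is the paper's argument. The certificate event is measurable with respect to the pre-label transcript, whose law is the same in both worlds, so its safe-world probability equals its unsafe-world probability, and validity bounds that by $\delta$; your explicit handling of internal randomization and of which candidate is certified only spells out what the paper's short proof leaves implicit. On your last point, no factor of $K$ is needed: \Mzero{} lets every candidate's residual move while the cheap transcript stays unchanged, so you can take an unsafe world in which all candidates are unsafe at once, where any zero-label certificate is a false certification and the bound $\delta$ holds for the whole event.
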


\begin{corollary}[Pathwise stopping envelope]\label{cor:envelope}
Let $T_{\mathrm{mix}}(A)$, $T_{\mathrm L}(A)$, and $T_{\mathrm R}(A)$ be the first purchased-label times at which the mixture, ledger, and robust evidence reach level $A$. On every path $T_{\mathrm{mix}}(A)\le\min\{T_{\mathrm L}(A/w),\,T_{\mathrm R}(A/(1-w))\}$; with $w=1/2$ the portfolio reaches $A$ no later than the first component reaches $2A$.
\end{corollary}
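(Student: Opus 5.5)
The statement to prove is the last one given, Corollary~\ref{cor:envelope}. I will derive it directly from the pathwise inequality of Theorem~\ref{thm:bestexpert}, taking no further probabilistic step. Exponentiating that inequality gives, on every path and at every purchased-label time $t$,
\[
E_t^{\mathrm{mix}}\ \ge\ \max\{wE_t^{\mathrm L},\,(1-w)E_t^{\mathrm R}\}.
\]
This holds because all wealths are nonnegative and the mixture is a sum of two nonnegative terms. I will restate that fact rather than rely only on the logarithmic form, since the logarithmic form needs care when a component wealth is zero.

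Fix a level $A$ and a path. I will write $T_e(A)=\inf\{t: E_t^e\ge A\}$, with the convention $\inf\emptyset=\infty$. The argument then has two parts.
\begin{itemize}
\item If $t=T_{\mathrm L}(A/w)<\infty$, then $E_t^{\mathrm L}\ge A/w$. The display gives $E_t^{\mathrm{mix}}\ge w\cdot A/w=A$, so the mixture has reached $A$ by time $t$, and $T_{\mathrm{mix}}(A)\le t$ by minimality of the first hitting time.
\item The same argument with weight $1-w$ gives $T_{\mathrm{mix}}(A)\le T_{\mathrm R}(A/(1-w))$.
\end{itemize}
If either component time is infinite, the corresponding inequality is trivial. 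Taking the minimum of the two bounds gives the envelope.

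For the special case, setting $w=1/2$ makes $A/w=A/(1-w)=2A$. The bound becomes $T_{\mathrm{mix}}(A)\le\min\{T_{\mathrm L}(2A),T_{\mathrm R}(2A)\}$, which is the claim that the portfolio reaches $A$ no later than the first component reaches $2A$.

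The only point needing attention is bookkeeping. Stopping, resolution, and early finite-population resolution in Algorithm~\ref{alg:portfolio} must not break the comparison. I will handle this by defining all three wealth processes on a common clock of purchased labels, run as if unstopped. The hitting times are then path functionals of the same sequence, and the inequality is purely deterministic. Nothing about validity, Ville's inequality, or the sampling scheme is used.
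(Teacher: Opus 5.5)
Your proof is correct and follows the paper's own argument. Both use the pathwise bounds $E_t^{\mathrm{mix}}\ge wE_t^{\mathrm L}$ and $E_t^{\mathrm{mix}}\ge(1-w)E_t^{\mathrm R}$, then compare first crossing times with a missing crossing set to $\infty$. Your choices to work with the non-logarithmic form (avoiding $\log 0$) and to fix a common unstopped clock are sound clarifications that the paper leaves implicit.
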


\begin{corollary}[Price of hedging]\label{cor:hedgeprice}
In the canonical model with $\eta\in(\eta_-,\eta_+)$, let $T_{\mathrm{port}}$ be the first label at which the portfolio's certification evidence reaches $K/\delta$. Then
\[
    \E[T_{\mathrm{port}}]\ \le\ \frac{\log(2K/\delta)+c_\eta}{G_L(\eta)},\qquad c_\eta=\log\frac{1-p+\eta}{1-m},
\]
while the ledger expert alone satisfies the same bound with $\log(K/\delta)$ in place of $\log(2K/\delta)$. Not knowing whether history is accurate therefore costs at most $\log2/G_L(\eta)$ additional expected labels relative to the ledger's bound, and at $\eta=0$ the portfolio's bound is $\{\log(2K/\delta)+c_0\}/\kl(p\|m)$, which matches the lower bound $\kl(1-\beta\|\delta)/\kl(q_0\|q_1)$ of Theorem~\ref{thm:vigilance}, taken with $q_0=p$ and $q_1\downarrow m$, to first order in $\log(1/\delta)$ up to the factor $1-\beta$ and additive constants.
\end{corollary}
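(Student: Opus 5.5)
The plan is to reduce the portfolio's stopping time to the ledger expert's stopping time at a doubled level, using the pathwise envelope, and then bound the latter by Wald's identity applied to the ledger's log-wealth. The log-wealth is a random walk in the canonical model.

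\emph{Step 1 (envelope).} By Corollary~\ref{cor:envelope} with $w=1/2$, on every path $T_{\mathrm{port}}\le T_{\mathrm L}(2K/\delta)$. Hence $\E[T_{\mathrm{port}}]\le\E[T_{\mathrm L}(2K/\delta)]$, and it suffices to show
\[
\E[T_{\mathrm L}(A)]\le\frac{\log A+c_\eta}{G_L(\eta)}\quad\text{for every level }A>1 .
\]
Taking $A=K/\delta$ gives the ledger-only statement. Taking $A=2K/\delta$ gives the portfolio bound, and the difference of the two bounds is exactly $\log2/G_L(\eta)$.

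\emph{Step 2 (random-walk structure).} In the canonical model the boundary $m=\tau$ is constant and the ledger forecast $r=p-\eta$ is fixed. Since $\eta>\eta_-$ we have $r<m$, so the stake $\lambda=(m-r)/\{m(1-m)\}$ is a deterministic positive constant. Therefore $\log E_t^{\mathrm L}=\sum_{s\le t}X_s$ with $X_s=\log\{1+\lambda(m-Y_s)\}$ i.i.d. Because $\eta<\eta_+<p$, we have $r>0$, so the $Y=1$ factor $1-\lambda(1-m)=r/m$ is positive. Thus the increments are bounded, and by Proposition~\ref{prop:phase}(i),(ii) their mean is $\E X_s=G_L(\eta)>0$.

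The largest increment occurs at $Y=0$:
\[
\log(1+\lambda m)=\log\Big(1+\tfrac{m-r}{1-m}\Big)=\log\frac{1-r}{1-m}=\log\frac{1-p+\eta}{1-m}=c_\eta .
\]
Consequently, at $T=T_{\mathrm L}(A)$ the overshoot is controlled: $\log E_T^{\mathrm L}<\log A+c_\eta$, since the walk was below $\log A$ at time $T-1$.

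\emph{Step 3 (Wald).} For each $n$, $T\wedge n$ is a bounded stopping time, so Wald's identity gives
\[
G_L(\eta)\,\E[T\wedge n]=\E\big[\log E^{\mathrm L}_{T\wedge n}\big]\le\log A+c_\eta .
\]
The inequality holds because before $T$ the log-wealth is below $\log A$, and at $T$ it is below $\log A+c_\eta$. Letting $n\to\infty$ by monotone convergence yields the claim, and also gives $\E[T]<\infty$. This is the step I expect to need the most care. One must check that $\log E^{\mathrm L}_{t}<\log A$ for all $t<T$ uniformly, so that the truncated bound holds without assuming finiteness of $\E T$ in advance. One must also check that clipping of $\hat p^{\mathrm L}$ is inactive, which holds since $r\in(0,m)$.

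\emph{Step 4 (comparison at $\eta=0$).} At $\eta=0$, $G_L(0)=\kl(p\|m)$, so the portfolio bound is $\{\log(2K/\delta)+c_0\}/\kl(p\|m)$, whose leading term is $\log(1/\delta)/\kl(p\|m)$ as $\delta\to0$. For the lower bound of Theorem~\ref{thm:vigilance} with $q_0=p$, continuity of $\kl(p\|\cdot)$ gives $\kl(q_0\|q_1)\to\kl(p\|m)$ as $q_1\downarrow m$. Expanding the numerator,
\[
\kl(1-\beta\|\delta)=(1-\beta)\log\frac1\delta+\beta\log\frac{\beta}{1-\delta}+(1-\beta)\log(1-\beta)=(1-\beta)\log\frac1\delta+O(1).
\]
The two bounds therefore agree in their $\log(1/\delta)$ term up to the factor $1-\beta$ and additive constants: $\log(2K)$ and $c_0$ on the upper side, and the $O(1)$ entropy terms on the lower side.
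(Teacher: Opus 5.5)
Your proposal is correct and follows the paper's proof essentially step for step. It uses the same ingredients: the pathwise envelope $T_{\mathrm{port}}\le T_{\mathrm L}(2K/\delta)$, then Wald's identity on $T_{\mathrm L}\wedge n$ with the overshoot bounded by the largest increment $c_\eta$, then monotone convergence, and finally the expansion $\kl(1-\beta\|\delta)=(1-\beta)\log(1/\delta)+O(1)$ together with $\kl(p\|q_1)\to\kl(p\|m)$. You also note that finiteness of $\E[T_{\mathrm L}]$ follows from the truncated argument, whereas the paper asserts it beforehand; this is a small but valid tightening.
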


\section{Proofs}\label{app:proofs}

This section proves the results of the main text in the order in which they appear. Throughout, $P^{\mathrm{tr}}$ denotes the law of the observed transcript, which contains every cheap observation, every purchased label, and every internal decision of the auditor.

\subsection{Proofs for Section~\ref{sec:vigilance}}\label{app:proofs-vigilance}

\paragraph{Proof of Theorem~\ref{thm:vigilance}.}
Let $A=\{C_i=1\text{ for every }i\}$. Liveness gives $P_0(A)\ge1-\beta$. Under $P_i$ coordinate $i$ is unsafe, so certifying it violates honesty and $P_i(A)\le\delta$. Only labels queried at coordinate $i$ have different laws under $P_0$ and $P_i$. If $\E_0[N_i]=\infty$ the claim is immediate. Otherwise apply the chain rule for relative entropy to the transcript stopped after the first $r$ queries at coordinate $i$. Because each query decision is predictable given the past transcript, the per-step contribution is $\kl(q_0\|q_1)$ when a label at coordinate $i$ is drawn and zero otherwise, so $\KL(P_0^{\mathrm{tr}}\|P_i^{\mathrm{tr}})=\E_0[N_i\wedge r]\kl(q_0\|q_1)$ for the stopped transcript. Letting $r\to\infty$ and applying monotone convergence gives $\KL(P_0^{\mathrm{tr}}\|P_i^{\mathrm{tr}})=\E_0[N_i]\kl(q_0\|q_1)$. Data processing through the indicator of $A$ gives
\begin{equation}
    \E_0[N_i]\kl(q_0\|q_1)\ \ge\ \kl\big(P_0(A)\,\|\,P_i(A)\big)\ \ge\ \kl(1-\beta\|\delta),
\end{equation}
using that $\kl(a\|b)$ is increasing in $a$ above $b$ and decreasing in $b$ below $a$, with $1-\beta>\delta$. Summing over $i$ gives the cumulative bound. $\square$

A by-product is worth recording. On the event $N_i=0$ the transcript has the same law under both measures, so $P_0(C_i=1,N_i=0)=P_i(C_i=1,N_i=0)\le\delta$, and subtracting from $P_0(C_i=1)\ge1-\beta$ gives $P_0(N_i\ge1)\ge1-\beta-\delta$. Vigilance is not only expensive in expectation; with high probability it happens at every coordinate.

\paragraph{Proof of Proposition~\ref{prop:nolabelfree}.}
The cheap transcript has the same law under $P_0$ and $P_i$ by construction, and $\Psi$ is measurable with respect to it, so $\Psi$ has the same law under both. $\square$

\paragraph{Proof of Theorem~\ref{thm:trustedtransport}.}
(i) Let $H=\bigcap_kH_k$ and $B=\bigcap_kB_k$. On $H\cap B$, every $k$ satisfies $\mu_E^{\mathrm{tar}}(k)\le\mu_E^{\mathrm{hist}}(k)+\rho_k\le U_H(k)+\rho_k$. For $k\in\mathcal C(\rho)$, $s_k\ge\rho_k$ means $\overline Q_{\mathrm{tar}}(k)+U_H(k)+\rho_k\le\tau_k$, so $R(k)=\overline Q_{\mathrm{tar}}(k)+\mu_E^{\mathrm{tar}}(k)\le\tau_k$ and $k$ is safe. A false zero-label certificate therefore requires $H^c\cup B^c$, and $\Pr(H^c\cup B^c)\le\delta_H+\delta_T$ by a union bound, which needs no independence. The proxy mean $\overline Q_{\mathrm{tar}}(k)$ is computed exactly on the whole pool and contributes no error. (ii) With $\delta_T=0$ the bound reads $\delta_H$; if $U_H$ also holds surely, $H^c\cup B^c$ is null. (iii) $k\in\mathcal C(\rho)$ iff $s_k\ge\rho$, so $\rho\le\rho'$ implies $\mathcal C(\rho')\subseteq\mathcal C(\rho)$, and $|\mathcal C(\rho)|/K=K^{-1}\sum_k\mathbf 1\{s_k\ge\rho\}$ is the empirical survival function of the slacks, which vanishes for $\rho>\max_ks_k$. $\square$

\paragraph{Proof of Corollary~\ref{cor:nofreebridge}.}
Let $\mathcal G_0$ be the pre-label transcript and $A\in\mathcal G_0$ the event that a zero-label certificate is issued. The two worlds restrict to the same law on $\mathcal G_0$, so $P_{\mathrm{safe}}(A)=P_{\mathrm{unsafe}}(A)\le\delta$. Any rule that tries to infer from the same transcript whether to trust a bridge is still $\mathcal G_0$-measurable and is covered by the same equality. $\square$

\subsection{Proofs for Section~\ref{sec:method}}\label{app:proofs-method}

\paragraph{Proof of Theorem~\ref{thm:validity}.}
Fix a candidate and suppress $k$. Under $R\ge\tau$, the rows not yet drawn sum to $NR-S_{t-1}\ge N\tau-S_{t-1}$, so uniform sampling from them gives $\E[Y_{\pi_t}\mid\mathcal F_{t-1}]\ge m_t$ with $m_t$ as in Eq.~\ref{eq:mt}. For predictable $\lambda_t\ge0$ with $1+\lambda_t(m_t-y)\ge0$ for all $y\in[0,1]$, $\E[1+\lambda_t(m_t-Y_{\pi_t})\mid\mathcal F_{t-1}]\le1$, so each expert's wealth is a nonnegative supermartingale with unit start, and so is any fixed convex mixture of the two. Ville's inequality bounds the probability that the certification mixture of a given unsafe candidate ever reaches $K/\delta$ by $\delta/K$, and a union bound over the at most $K$ candidates gives the family-wise false-certification bound $\delta$. The rejection direction reverses the centered factor under $R\le\tau$ and gives $\beta$ in the same way. The stakes use only $\mathcal F_{t-1}$, which contains the historical advice as a fixed quantity; no property of the advice is used, which is why the guarantee holds for arbitrary advice. When $m_t\notin(0,1)$ the implementation skips betting and relies on the deterministic closure bounds $L_t=S_t/N$ and $U_t=(S_t+N-t)/N$, which are exact and so preserve validity. One purchased row updates every candidate, stopping one candidate does not change the random-order design for the others, and unresolved candidates at budget exhaustion are abstentions. $\square$

A stake that conditions on the proxy of the next selected row is not $\mathcal F_{t-1}$-measurable and breaks the supermartingale property; the certifier therefore uses only all-pool cheap summaries and previously purchased outcomes.

\paragraph{Proof of Theorem~\ref{thm:bestexpert}.}
A fixed convex combination of nonnegative supermartingales with unit start is again one, so $E^{\mathrm{mix}}$ is an e-process. Pathwise, $E_t^{\mathrm{mix}}\ge wE_t^{\mathrm L}$ and $E_t^{\mathrm{mix}}\ge(1-w)E_t^{\mathrm R}$, giving the displayed inequality and regret at most $c(w)$. The constant is sharp: if $E_t^{\mathrm L}/E_t^{\mathrm R}\to\infty$ then $E_t^{\mathrm{mix}}/E_t^{\mathrm L}\to w$ and the regret approaches $-\log w$, and symmetrically for the robust expert. Hence no smaller uniform constant holds for that $w$, and $c(w)$ is minimized where $-\log w=-\log(1-w)$, uniquely at $w=1/2$ with value $\log2$. $\square$

\paragraph{The advice-mismatch identity.} Proposition~\ref{prop:phase} rests on an exact one-step identity for the expected log growth of a Kelly bettor with a possibly stale forecast.

\begin{theorem}[Advice-mismatch growth identity]\label{thm:mismatch}
Consider one certification step with $Y\sim\mathrm{Ber}(p)$, $0<p<m<1$, and a ledger forecast $r\in(0,m)$. With the Kelly factor $B_r(Y)=1+\frac{m-r}{m(1-m)}(m-Y)$,
\[
    \E_p[\log B_r(Y)]=\kl(p\|m)-\kl(p\|r).
\]
The oracle forecast $r=p$ maximizes expected log growth, stale advice loses exactly $\kl(p\|r)$ relative to it, and ledger evidence grows in expectation if and only if $\kl(p\|r)<\kl(p\|m)$.
\end{theorem}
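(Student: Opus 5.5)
The identity follows from one observation: the Kelly factor $B_r$ is exactly the likelihood ratio of $\mathrm{Ber}(r)$ to $\mathrm{Ber}(m)$ evaluated at $Y$. Once that is checked, the expected log growth splits into two binary relative entropies with no approximation.

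First I evaluate $B_r$ at the two atoms of $Y$. For $Y=0$,
\[
B_r(0)=1+\frac{m-r}{1-m}=\frac{1-r}{1-m}.
\]
For $Y=1$,
\[
B_r(1)=1-\frac{m-r}{m}=\frac{r}{m}.
\]
Hence $B_r(Y)=\frac{r^Y(1-r)^{1-Y}}{m^Y(1-m)^{1-Y}}$, which is the density ratio $d\mathrm{Ber}(r)/d\mathrm{Ber}(m)$. Both values are strictly positive because $r\in(0,m)$ and $m<1$, so the logarithm is finite and the stake is admissible.

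Next I take the expectation under $\mathrm{Ber}(p)$. Writing $\log\frac{r}{m}=\log\frac{p}{m}-\log\frac{p}{r}$, and likewise for the complements, gives
\[
\E_p[\log B_r(Y)]=p\log\tfrac rm+(1-p)\log\tfrac{1-r}{1-m}
=\Big[p\log\tfrac pm+(1-p)\log\tfrac{1-p}{1-m}\Big]-\Big[p\log\tfrac pr+(1-p)\log\tfrac{1-p}{1-r}\Big].
\]
The two brackets are $\kl(p\|m)$ and $\kl(p\|r)$, which proves the displayed identity. Since $0<p<1$, both divergences are finite.

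The remaining claims are consequences of the identity. Because $\kl(p\|r)\ge0$, with equality if and only if $r=p$ (Gibbs' inequality, or strict convexity of $r\mapsto\kl(p\|r)$), the oracle forecast $r=p$ is the unique maximizer, with value $\kl(p\|m)$. The gap to this optimum is exactly $\kl(p\|r)$. Expected log growth is positive if and only if $\kl(p\|r)<\kl(p\|m)$.

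I see no real obstacle here. The only steps needing care are the algebra that recognizes $B_r$ as a likelihood ratio and the positivity of both atoms.
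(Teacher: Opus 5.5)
Your proof is correct and matches the paper's argument: evaluate $B_r$ at $Y=1$ and $Y=0$ to get $r/m$ and $(1-r)/(1-m)$, split the expectation into $\kl(p\|m)-\kl(p\|r)$, and read off the remaining claims from $\kl(p\|r)\ge 0$ with equality only at $r=p$. Your reading of $B_r$ as the likelihood ratio of $\mathrm{Ber}(r)$ to $\mathrm{Ber}(m)$ explains why the split is exact, but the computation is the same as the paper's.
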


\paragraph{Proof of Theorem~\ref{thm:mismatch}.}
With $\lambda=(m-r)/\{m(1-m)\}$, $B_r(1)=1-\lambda(1-m)=r/m$ and $B_r(0)=1+\lambda m=(1-r)/(1-m)$. Hence
\begin{align*}
\E_p\log B_r(Y)
&=p\log\frac rm+(1-p)\log\frac{1-r}{1-m}\\
&=\Big[p\log\frac pm+(1-p)\log\frac{1-p}{1-m}\Big]-\Big[p\log\frac pr+(1-p)\log\frac{1-p}{1-r}\Big]
=\kl(p\|m)-\kl(p\|r).
\end{align*}
The second term is nonnegative and zero only at $r=p$. The rejection direction follows by replacing $(Y,p,m,r)$ with $(1-Y,1-p,1-m,1-r)$. $\square$

\paragraph{Proof of Corollary~\ref{cor:envelope}.}
Since $E_t^{\mathrm{mix}}\ge wE_t^{\mathrm L}$ and $E_t^{\mathrm{mix}}\ge(1-w)E_t^{\mathrm R}$ at every $t$, the mixture reaches $A$ no later than the ledger expert reaches $A/w$ or the robust expert reaches $A/(1-w)$. Taking first crossing times, with a missing crossing at time $\infty$, gives the claim. No growth-rate or overshoot assumption is used. $\square$

\paragraph{Proof of Proposition~\ref{prop:phase}.}
Write $f(r,y)=\log\{1+\lambda(r)(m-y)\}$ with $\lambda(r)=[m-r]_+/\{m(1-m)\}$, so that $f(r,1)=\log(\min\{r,m\}/m)$ and $f(r,0)=\log\{(1-\min\{r,m\})/(1-m)\}$.
(i) The ledger forecast $r=p-\eta$ is fixed, so $t^{-1}\log E_t^{\mathrm L}=t^{-1}\sum_{s\le t}f(r,Y_s)\to\E_pf(r,Y)$ by the strong law. If $r\ge m$ the stake is zero and $f\equiv0$. If $0<r<m$, Theorem~\ref{thm:mismatch} gives $\E_pf(r,Y)=\kl(p\|m)-\kl(p\|r)$. If $r\le0$ the clipped forecast is $0$, $f(0,1)=-\infty$, and the wealth is zero from the first unit loss, which occurs almost surely because $p>0$.
(ii) On $(0,p)$ the map $r\mapsto\kl(p\|r)$ is continuous and strictly decreasing from $+\infty$ to $0$, so $r_\star$ exists and is unique; on $(p,1)$ it is strictly increasing. Hence $\kl(p\|r)<\kl(p\|m)$ exactly for $r\in(r_\star,m)$, that is $\eta\in(\eta_-,\eta_+)$; $\kl(p\|r)=0$ only at $r=p$; and $\kl(p\|r)>\kl(p\|m)$ for $r\in(0,r_\star)$, that is $\eta\in(\eta_+,p)$.
(iii) The truncated robust forecast $\hat p_{t-1}=\max\{\varepsilon,\bar Y_{t-1}\}$ (with the proxy mean as the forecast at $t=1$) converges to $p$ almost surely because $p\ge\varepsilon$. On $[\varepsilon,1]$, $f(\cdot,y)$ is continuous and bounded for $y\in\{0,1\}$, because $f(r,1)\ge\log(\varepsilon/m)$, so it is uniformly continuous with some modulus $\omega$. Then $t^{-1}\sum_{s\le t}|f(\hat p_{s-1},Y_s)-f(p,Y_s)|\le t^{-1}\sum_{s\le t}\omega(|\hat p_{s-1}-p|)\to0$ by Ces\`aro averaging, and $t^{-1}\sum_{s\le t}f(p,Y_s)\to\E_pf(p,Y)=\kl(p\|m)$ by the strong law and Theorem~\ref{thm:mismatch} with $r=p$.
(iv) Pathwise $\tfrac12\max\{E_t^{\mathrm L},E_t^{\mathrm R}\}\le E_t^{\mathrm{mix}}\le\max\{E_t^{\mathrm L},E_t^{\mathrm R}\}$, so $t^{-1}\log E_t^{\mathrm{mix}}$ has the same limit as $t^{-1}\max\{\log E_t^{\mathrm L},\log E_t^{\mathrm R}\}$, namely $\max\{G_L(\eta),\kl(p\|m)\}$, which equals $\kl(p\|m)$ because $G_L(\eta)\le\kl(p\|m)$ by (ii). $\square$

\paragraph{Proof of Corollary~\ref{cor:hedgeprice}.}
For $\eta\in(\eta_-,\eta_+)$ the ledger log wealth $L_t=\sum_{s\le t}f(r,Y_s)$ is a random walk with i.i.d.\ increments of mean $G_L(\eta)>0$, whose largest increment is $f(r,0)=c_\eta$. Let $T_{\mathrm L}(A')$ be its first passage above $\log A'$; it is finite with finite mean because the drift is positive and the increments are bounded. Before passage $L_t<\log A'$, so $L_{T_{\mathrm L}\wedge n}\le\log A'+c_\eta$ for every $n$. Wald's identity \citep{wald1947} gives $G_L(\eta)\E[T_{\mathrm L}\wedge n]=\E[L_{T_{\mathrm L}\wedge n}]\le\log A'+c_\eta$, and monotone convergence gives $\E[T_{\mathrm L}(A')]\le(\log A'+c_\eta)/G_L(\eta)$. Corollary~\ref{cor:envelope} with $w=1/2$ gives $T_{\mathrm{port}}(K/\delta)\le T_{\mathrm L}(2K/\delta)$ pathwise; taking expectations with $A'=2K/\delta$ proves the first bound, and $A'=K/\delta$ gives the ledger's. Their difference is $\log2/G_L(\eta)$. At $\eta=0$, $G_L(0)=\kl(p\|m)$. Finally, $\kl(1-\beta\|\delta)=(1-\beta)\log(1/\delta)+O(1)$ as $\delta\to0$ and $\kl(p\|q_1)\to\kl(p\|m)$ as $q_1\downarrow m$, which gives the stated comparison with Theorem~\ref{thm:vigilance}. $\square$

\paragraph{Stream-level extension.} If predictable block budgets satisfy $\sum_b\delta_b\le\delta_{\mathrm{total}}$, conditional block validity and a union bound give stream-level false-certification probability at most $\delta_{\mathrm{total}}$. The reported experiments use block-level $\delta=0.05$.

\section{A matching certifier and the cost of label caps}\label{app:matching}

This appendix states and proves the upper-bound side of Theorem~\ref{thm:vigilance} in the Bernoulli certification model of Section~\ref{sec:vigilance}. For a queried label $X$ and $p_q(x)=q^x(1-q)^{1-x}$, let $z(X)=\log\{p_{q_1}(X)/p_{q_0}(X)\}$ and let $Z_{i,n}$ be the sum of the first $n$ increments at coordinate $i$. With $a=\log(M/\beta)$ and $b=\log(1/\delta)$, \VigilLR{} processes coordinates in deployment order and queries coordinate $i$ until $Z_{i,n}\le-b$, when it certifies the coordinate and continues, or $Z_{i,n}\ge a$, when it enters permanent fallback.

\begin{theorem}[Matching expected-label upper bound]\label{thm:upper}
\VigilLR{} satisfies Eq.~\ref{eq:honesty} over the composite safe and unsafe classes, and $\E_0[\sum_iN_i]\le M\{\log(1/\delta)+c_-\}/\kl(q_0\|q_1)$ with $c_-=\log\frac{1-q_0}{1-q_1}$. Consequently, for $q_0,q_1\in[\eta,1-\eta]$, $\beta\le\beta_0<1$, and $\delta\to0$, the expected-label minimax rate is $\Theta\big(M(q_1-q_0)^{-2}\log(1/\delta)\big)$.
\end{theorem}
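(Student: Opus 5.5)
The proof splits into three parts: validity (honesty and liveness) of \VigilLR{}, the expected-label bound under $P_0$, and the minimax rate. Throughout, $z(X)$ is the log-likelihood ratio of $q_1$ against $q_0$.

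\textbf{Honesty.} Fix an unsafe coordinate $i$ whose queried labels are $\mathrm{Ber}(q)$ with $q\ge q_1$. I will show that $\exp(-Z_{i,n})=\prod_s p_{q_0}(X_s)/p_{q_1}(X_s)$ is a nonnegative supermartingale with unit start under any such $q$. The one-step mean is
\[
q\,\frac{q_0}{q_1}+(1-q)\,\frac{1-q_0}{1-q_1},
\]
which is affine in $q$, decreasing because $q_0/q_1<1<(1-q_0)/(1-q_1)$, and equals $1$ at $q=q_1$. Ville's inequality then bounds the probability that $Z_{i,n}$ ever reaches $-b$ by $e^{-b}=\delta$. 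Coordinates are processed in deployment order and the first unsafe coordinate reached is the only one that matters, so the family-wise guarantee is at most $\delta$ without a union over coordinates. Before coordinate $i$ is touched, fallback or certification of earlier coordinates does not affect its labels. I will spell out that, under the composite class, an unsafe coordinate is certified only if the walk at that coordinate hits $-b$. If the writer intended a per-coordinate union bound instead, I would use $b=\log(M/\delta)$. The stated $b=\log(1/\delta)$ together with $a=\log(M/\beta)$ suggests this asymmetry, and I will follow it.

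\textbf{Liveness.} Under $P_0$, or any safe law with $q\le q_0$, $\exp(Z_{i,n})$ is a supermartingale by the symmetric computation. So each coordinate enters fallback with probability at most $e^{-a}=\beta/M$. A union bound over the $M$ coordinates gives $P_0(\text{some fallback})\le\beta$. Each walk has drift $-\kl(q_0\|q_1)<0$ under $P_0$, so it exits almost surely, and hence every coordinate is certified with probability at least $1-\beta$.

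\textbf{Expected labels.} Under $P_0$, $Z_{i,n}$ is a random walk with drift $-\kl(q_0\|q_1)$. Its increments lie in $[\log(q_1/q_0),\,-c_-]$, so a downward step overshoots $-b$ by at most $c_-=\log\frac{1-q_0}{1-q_1}$. Let $\tau_i$ be the exit time. By Wald's identity, applied first to $\tau_i\wedge n$ and then letting $n\to\infty$ by monotone convergence exactly as in the proof of Corollary~\ref{cor:hedgeprice},
\[
\kl(q_0\|q_1)\,\E_0[\tau_i]=-\E_0[Z_{i,\tau_i}].
\]
Since $Z_{i,\tau_i}\ge -b-c_-$ on either exit, $\E_0[N_i]\le\E_0[\tau_i]\le(b+c_-)/\kl(q_0\|q_1)$. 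Coordinates after a fallback use no labels, so summing over $i$ gives the bound.

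\textbf{Minimax rate.} Combine the bound above with Theorem~\ref{thm:vigilance}. On the one hand, $\kl(1-\beta\|\delta)\ge(1-\beta_0)\log(1/\delta)-\log 2$. On the other, for $q_0,q_1\in[\eta,1-\eta]$, Pinsker's inequality and the $\chi^2$ upper bound give $2(q_1-q_0)^2\le\kl(q_0\|q_1)\le(q_1-q_0)^2/(\eta(1-\eta))$. The overshoot satisfies $c_-\le\log(1/\eta)$, which is $O(1)$ and absorbed as $\delta\to0$.

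The main obstacle is the honesty accounting. I need to pin down whether the stated threshold $b=\log(1/\delta)$ gives family-wise validity through a sequential argument: once one coordinate is certified and the auditor moves on, the walks at subsequent unsafe coordinates are fresh, so the events in a union over unsafe coordinates are not jointly small. I will try first to read the composite unsafe class as containing at most one unsafe coordinate, matching the $P_i$ alternatives. Failing that, I will replace $b$ by $\log(M/\delta)$, which only adds $\log M$ to the upper bound and leaves the $\Theta$ rate unchanged for fixed $M$.
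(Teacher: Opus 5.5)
Your liveness, expected-label, and minimax-rate arguments follow the paper. You apply Ville to $\exp(Z_{i,n})$ with a union bound at level $\beta/M$, use Wald on $\tau_i\wedge n$ with overshoot at most $c_-$, and compare with Theorem~\ref{thm:vigilance}. Your explicit bounds $\kl(1-\beta\|\delta)\ge(1-\beta_0)\log(1/\delta)-\log 2$ and $2(q_1-q_0)^2\le\kl(q_0\|q_1)\le(q_1-q_0)^2/\{\eta(1-\eta)\}$ are more careful than the paper's $\kl(1-\beta\|\delta)=\log(1/\delta)(1+o(1))$. One small correction: the increment range should read $[-c_-,\log(q_1/q_0)]$.

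The gap is in honesty. You assert that only the first unsafe coordinate matters, but you never say why. Your closing paragraph shows you doubt it, since you fall back on restricting the unsafe class or raising $b$.

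The missing idea is the \emph{absorbing fallback}. Let $i^\star$ be the first unsafe coordinate in deployment order; it is a fixed index under any law in the composite class. Suppose $C_{i^\star}=0$. Then one of the following occurred:
\begin{itemize}
\item the procedure stopped certifying before reaching $i^\star$;
\item the walk at $i^\star$ exited through $a$ and triggered permanent fallback;
\item the walk at $i^\star$ never exited.
\end{itemize}
In every case no coordinate after $i^\star$ is certified, and every coordinate before $i^\star$ is safe. Hence
\[
\{\exists i\text{ unsafe}: C_i=1\}\subseteq\{C_{i^\star}=1\}\subseteq\{\inf_n Z_{i^\star,n}\le -b\}.
\]
The last event has probability at most $\delta$ by your supermartingale computation and Ville's inequality. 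Your worry that later unsafe walks are ``fresh'' after a certification is moot. Once $i^\star$ is certified, the error event has already occurred, and what later coordinates do cannot raise its probability. This also explains the asymmetry you noticed between $a=\log(M/\beta)$ and $b=\log(1/\delta)$. Liveness needs all $M$ safe walks to avoid $a$, whereas honesty needs only the single walk at $i^\star$ to avoid $-b$.

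Neither of your fallback options proves the statement. Restricting the unsafe class to one unsafe coordinate changes the model. Setting $b=\log(M/\delta)$ yields
\[
M\{\log(M/\delta)+c_-\}/\kl(q_0\|q_1),
\]
not the stated bound. It reintroduces exactly the $\log M$ factor that Proposition~\ref{prop:cap} says adaptive stopping with absorbing fallback removes.
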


\begin{proof}
\emph{Honesty.} Let $i^\star$ be the first unsafe coordinate reached while the procedure is active. At mean $q_1$, $\exp(-Z_{i^\star,n})$ is a nonnegative martingale with unit start, and at larger means a supermartingale by the monotone likelihood ratio of the Bernoulli family. Ville's inequality gives $\Pr(\inf_nZ_{i^\star,n}\le-b)\le\delta$. If $i^\star$ instead reaches the upper boundary, absorbing fallback prevents every later certificate, so no $\delta/M$ split is needed.
\emph{Liveness.} Under $P_0$, $\exp(Z_{i,n})$ is a nonnegative martingale, and a supermartingale at safer means, so Ville's inequality and a union bound give $P_0(\exists i,n:Z_{i,n}\ge a)\le Me^{-a}=\beta$; each walk has negative drift and reaches $-b$ almost surely unless it first reaches $a$.
\emph{Cost.} Under $P_0$ the increment mean is $-\kl(q_0\|q_1)$, and the lower crossing overshoots by at most $c_-$, so Wald's identity \citep{wald1947} on $\tau_i\wedge r$ gives $\kl(q_0\|q_1)\E_0[\tau_i\wedge r]\le b+c_-$; monotone convergence and summation give the bound. The upper boundary is reached only on the fallback event and contributes no $\log M$ term to expected safe-path cost. The rate follows by combining with Theorem~\ref{thm:vigilance}, since $\kl(q_0\|q_1)=\Theta((q_1-q_0)^2)$ on $[\eta,1-\eta]^2$ and $\kl(1-\beta\|\delta)=\log(1/\delta)(1+o(1))$.
\end{proof}

\begin{proposition}[Deterministic caps cost more]\label{prop:cap}
Let $\rho=1+(q_1-q_0)^2/\{q_0(1-q_0)\}$. An auditor that queries at most $n$ labels at each coordinate almost surely needs
\[
    n\ \ge\ \max\left\{\frac{\kl(1-\beta\|\delta)}{\kl(q_0\|q_1)},\ \frac{\kl(\delta\|1-\beta)}{\kl(q_1\|q_0)},\ \frac{\log\big(1+4M(1-\beta-\delta)^2\big)}{\log\rho}\right\},
\]
and the fixed-sample rule \VigilCap{}, which certifies a coordinate when its $n$-sample mean is at most $(q_0+q_1)/2$ and otherwise falls back permanently, satisfies Eq.~\ref{eq:honesty} whenever $n\ge2(q_1-q_0)^{-2}\max\{\log(M/\beta),\log(1/\delta)\}$. For interior parameters the cap therefore costs $\Theta\big((q_1-q_0)^{-2}\max\{\log M,\log(1/\beta),\log(1/\delta)\}\big)$ per coordinate, a $\log M$ factor that adaptive stopping removes.
\end{proposition}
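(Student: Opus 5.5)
The proof has three parts, taken in order: the two KL-type terms of the lower bound, the $\log M$ term, and the sufficiency of \VigilCap{}.

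\emph{First two terms.} These come from the proof of Theorem~\ref{thm:vigilance}, specialised to a deterministic cap. With $A=\{C_i=1\ \text{for every}\ i\}$ we have $P_0(A)\ge1-\beta$ and $P_i(A)\le\delta$. The transcript KL equals $\E[N_i]$ times the per-label KL, and $N_i\le n$ almost surely. Data processing in the $P_0\|P_i$ direction gives $n\,\kl(q_0\|q_1)\ge\kl(1-\beta\|\delta)$. The same argument in the $P_i\|P_0$ direction uses $\E_i[N_i]\le n$ and the per-label KL $\kl(q_1\|q_0)$. Data processing through $\mathbf 1_A$ then gives $n\,\kl(q_1\|q_0)\ge\kl(P_i(A)\|P_0(A))\ge\kl(\delta\|1-\beta)$, by monotonicity of $\kl$ with $\delta<1-\beta$. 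The cap is what makes this reverse direction usable: under $P_i$ we no longer control $\E_i[N_i]$ except through $N_i\le n$.

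\emph{The $\log M$ term.} This term does not follow from a single pairwise comparison, so I would use the uniform mixture $\bar P=M^{-1}\sum_iP_i$ and a $\chi^2$ (second-moment) argument. Validity gives $\bar P(A)\le\delta$, so $P_0(A)-\bar P(A)\ge1-\beta-\delta$. The target inequality is
\[
(1-\beta-\delta)^2\le \mathrm{TV}(P_0,\bar P)^2\le \tfrac14\chi^2(\bar P\|P_0).
\]
Next I compute $\chi^2(\bar P\|P_0)=M^{-2}\sum_{i,j}\E_0[L_iL_j]-1$, where $L_i$ is the likelihood ratio of the transcript.
\begin{itemize}
\item For $i\ne j$, $\E_0[L_iL_j]=1$. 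The factors of $L_i$ and $L_j$ live on labels at different coordinates, and each predictable factor has conditional mean one under $P_0$.
\item For $i=j$, $\E_0[L_i^2]\le\rho^{\,n}$. Each label at coordinate $i$ contributes the conditional factor $\E_{q_0}[(p_{q_1}/p_{q_0})^2]=1+(q_1-q_0)^2/\{q_0(1-q_0)\}=\rho$, and at most $n$ such labels occur. I expect to make this rigorous through a supermartingale of the form $L_i^2\rho^{-N_i}$.
\end{itemize}
Hence $\chi^2\le(\rho^n-1)/M$. This gives $4M(1-\beta-\delta)^2\le\rho^n-1$, which is the third term. I expect this step to be the main obstacle: the adaptive, predictable sampling must be handled carefully so that the diagonal bound holds with the cap and the cross terms vanish exactly.

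\emph{Sufficiency of \VigilCap{}.} I would apply Hoeffding at gap $(q_1-q_0)/2$, which gives tail probability $\exp(-n(q_1-q_0)^2/2)$.
\begin{itemize}
\item Liveness: under $P_0$, each coordinate exceeds the midpoint with probability at most $\beta/M$ once $n\ge2(q_1-q_0)^{-2}\log(M/\beta)$; a union bound over the $M$ coordinates gives $\beta$.
\item Honesty: permanent fallback means only the first unsafe coordinate reached can be falsely certified, so no union bound is needed. That coordinate falls at or below the midpoint with probability at most $\delta$ once $n\ge2(q_1-q_0)^{-2}\log(1/\delta)$. Monotonicity in the mean extends this to the composite classes.
\end{itemize}

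\emph{The $\Theta$ statement.} For interior parameters $\log\rho$, $\kl(q_0\|q_1)$ and $\kl(q_1\|q_0)$ are all $\Theta((q_1-q_0)^2)$. Also $\kl(1-\beta\|\delta)\gtrsim\log(1/\delta)$ and $\kl(\delta\|1-\beta)\gtrsim\log(1/\beta)$, each up to constants when $\beta$ and $\delta$ are bounded away from the boundary so that the binary entropy corrections are $O(1)$. Together with the sufficiency result, these give matching upper and lower bounds. Comparing with Theorem~\ref{thm:upper} shows the extra $\log M$ factor.
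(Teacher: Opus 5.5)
Your proposal is correct and follows the paper's proof essentially step for step. The paper also derives the two KL terms by data processing through the all-certified event in both directions with $N_i\le n$. It obtains the $\log M$ term from the uniform mixture $M^{-1}\sum_iP_i$, using $\E_0[L_iL_j]=1$ for $i\ne j$, the bound $\E_0[L_i^2]\le\rho^n$ via $L_i^2\rho^{-N_i}$ (which it calls a martingale; your supermartingale suffices), and $\mathrm{TV}\le\frac12\sqrt{\chi^2}$. Finally, it verifies \VigilCap{} with Hoeffding at the midpoint, a union bound over the $M$ coordinates for liveness, and absorbing fallback for honesty.
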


\begin{proof}
The first term is Theorem~\ref{thm:vigilance} with $\E_0[N_i]\le n$. The second reverses the relative-entropy direction: $\KL(P_i^{\mathrm{tr}}\|P_0^{\mathrm{tr}})\le n\kl(q_1\|q_0)$ and data processing through the all-certified event gives $n\kl(q_1\|q_0)\ge\kl(\delta\|1-\beta)$. For the third, let $L_i=dP_i^{\mathrm{tr}}/dP_0^{\mathrm{tr}}$ and $Q=M^{-1}\sum_iP_i$. For $i\ne j$ adaptive querying preserves $\E_0[L_iL_j]=1$, and $L_i^2/\rho^{N_i}$ is a nonnegative $P_0$-martingale with $N_i\le n$, so $\E_0[L_i^2]\le\rho^n$ and $\chi^2(Q\|P_0)\le(\rho^n-1)/M$. The all-certified event has probability at least $1-\beta$ under $P_0$ and at most $\delta$ under $Q$, so $1-\beta-\delta\le\mathrm{TV}(P_0,Q)\le\frac12\sqrt{(\rho^n-1)/M}$. For the upper bound, Hoeffding's inequality \citep{hoeffding1963} and a union bound over $M$ coordinates give liveness, and the first unsafe coordinate reached passes its test with probability at most $\exp\{-n(q_1-q_0)^2/2\}\le\delta$, after which absorbing fallback prevents later certificates.
\end{proof}

\section{Additional theoretical remarks}\label{app:theoryremarks}

This section collects remarks that the main text states briefly: the scope of the lower bound, why each assumption of the validity theorem is needed, and how to read the phase diagram.

\paragraph{Scope of Theorem~\ref{thm:vigilance}.} The expectation is taken under $P_0$, the stationary law on which residual transport is exactly true. The cost appears only because the auditor must also be honest over the class containing hidden single-coordinate violations, so it applies to any use of historical labels: prediction-powered correction, active sampling, shared ledgers, and randomized validation change the adaptive query rule inside the transcript, not the information available to it. For $\Delta=q_1-q_0\to0$ the per-coordinate cost is $\Omega\big(q_0(1-q_0)\Delta^{-2}\kl(1-\beta\|\delta)\big)$, the residual-variance term one expects from a prediction-powered analysis, multiplied by $M$ coordinates. A sequential likelihood-ratio certifier with absorbing fallback attains the bound up to an overshoot constant, closing the expected-label minimax rate at $\Theta\big(M(q_1-q_0)^{-2}\log(1/\delta)\big)$, and a deterministic per-coordinate cap pays an additional $\log M$ (Appendix~\ref{app:matching}). The bound is stated for i.i.d.\ Bernoulli draws; it does not transfer verbatim to the finite without-replacement pools of Section~\ref{sec:method}, where complete enumeration resolves any candidate after $N$ labels.

\paragraph{Necessity of the assumptions of Theorem~\ref{thm:validity}.} Each assumption is needed. If rows are not sampled uniformly, the conditional mean of the next loss need not exceed $m_t(k)$ and the betting factor can have mean above one. If a stake depends on the next row, for example on its proxy value, it is no longer predictable and the supermartingale property fails. If thresholds or advice depend on target-role outcomes, the null itself becomes data-dependent and the level is no longer controlled.

\paragraph{Reading the phase diagram.} Proposition~\ref{prop:phase} says three things a reader can check against Figure~\ref{fig:phase}. History can never improve the evidence \emph{rate}: accurate advice only matches the rate the robust expert reaches by learning, so the value of good history is a finite-sample head start, which is exactly where the empirical frontiers separate (Figure~\ref{fig:frontier}). The damage from bad history is asymmetric: pessimistic stale advice makes the ledger stop betting and costs only power, while optimistic stale advice beyond $\eta_+$ makes it lose wealth, so the ledger's evidence decays to zero and it certifies only if it crosses the threshold early by chance. The portfolio's rate is the vigilance rate $\kl(p\|m)$ for every $\eta$, so it never inherits either failure. Corollary~\ref{cor:hedgeprice} makes the finite-sample price of not knowing $\eta$ explicit.

\paragraph{Stale-advice regimes in the numerical evaluation.} In the numerical evaluation of Figure~\ref{fig:phase}, exact advice gives mean labels of $430.3$ for the ledger expert, $492.3$ for the portfolio, and $790.6$ for the robust expert. At $\eta=-0.06$ the ledger never bets and does not certify within 6{,}000 labels on any path, while the portfolio needs $913.4$. At $\eta=+0.06$ the ledger fails to certify within 6{,}000 labels on $38.7\%$ of paths, while the portfolio needs $751.0$, slightly below the robust expert's $784.1$; the ledger still crosses on $61.3\%$ of paths despite its negative drift, and by Corollary~\ref{cor:envelope} the portfolio stops no later than the earlier of the two experts' crossings at level $2K/\delta$. This is the qualitative pattern of the bad-history stress test in Section~\ref{sec:hedge}.

\section{Data and comparator details}\label{app:algorithms}

This section describes how the data roles are built and how the PPRM-style comparator maps to the original method.

\paragraph{Role construction.} DICES-990 and CIFAR-10N use deterministic checksum role assignment with 40\% training, 20\% ledger, 10\% calibration, and 30\% final roles. Whole DICES conversations are assigned to one role, and every one of the 72{,}103 DICES ratings and 50{,}000 CIFAR-10N images is assigned. CIFAR-10N additionally draws proxy-training, ledger and calibration, and final outcomes from its three disjoint human annotation sets. The primary evaluation covers 18 final blocks, five seeds, nine budgets, and eight methods, and the external evaluation covers six blocks with the same seeds, budgets, and methods.

\paragraph{PPRM-style comparator.} PPRM monitors running deployed risk under shift using prediction-powered residual correction and anytime-valid CM-EB bounds \citep{zhang2026pprm}. Our comparator keeps the estimator $\widehat R_n=\overline Q_{\mathrm{pool}}+n^{-1}\sum_{t\le n}(Y_{\pi_t}-Q_{\pi_t})$ and a CM-EB-style sequential radius but targets the mean of a fixed finite pool sampled by the same uniform without-replacement prefix, with the same thresholds, seeds, candidate sharing, and budget checkpoints as the portfolio. It returns release, rejection, or abstention, so label counts are never compared at unequal decision coverage. We call it ``PPRM-style'' throughout and do not transfer the original PPRM theorem to this setting.

\section{Per-block and external results}\label{app:primaryextra}

\begin{figure}[h]
\centering
\includegraphics[width=\textwidth]{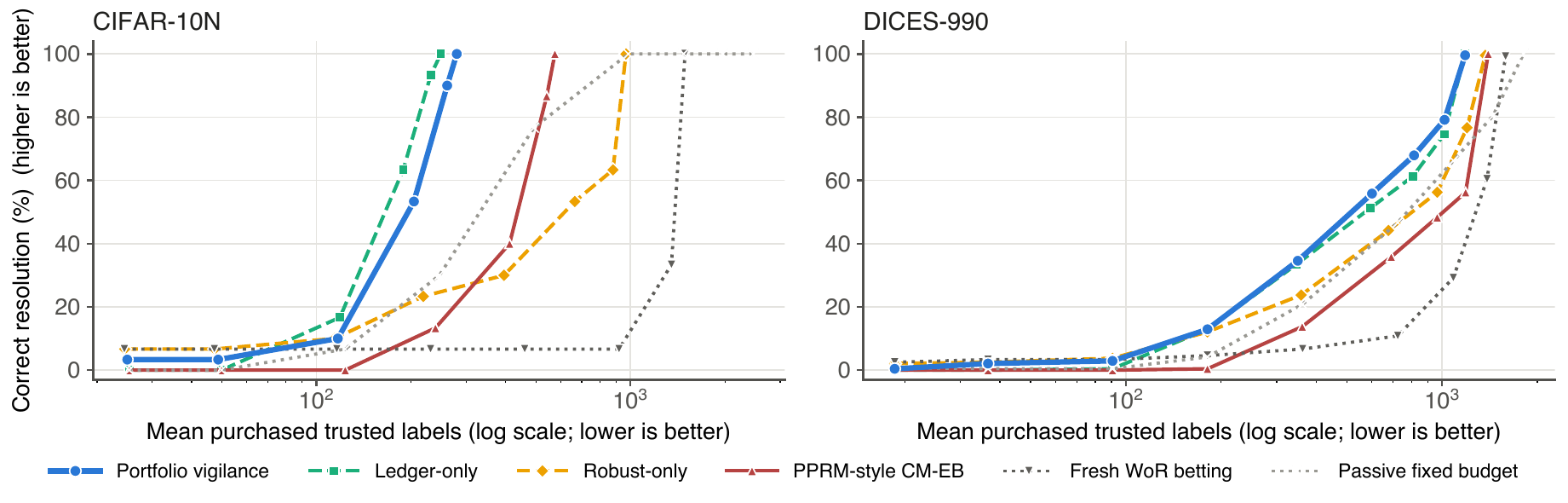}
\caption{Correct resolution (\up) against mean purchased labels (\down, log scale) across the nine budget checkpoints, averaged over blocks and seeds.}
\label{fig:frontier}
\end{figure}

Figure~\ref{fig:blocks} shows the full-budget label ratio of every held-out block. The six U.S.-locale block ratios range from $0.422$ to $0.849$, while the six India-locale ratios range from $0.858$ to $0.999$. Across the 12 blocks the ratio has Spearman correlation $-0.91$ with the distance between block risk and threshold and $+0.72$ with the ledger forecast error: the U.S. blocks are further from the decision boundary and better predicted by history, which is where betting has the most to gain. These are descriptive associations at $n=12$, not demographic inference. All six CIFAR-10N blocks favor the portfolio (ratios $0.275$ to $0.582$).

\begin{figure}[t]
\centering
\includegraphics[width=\textwidth]{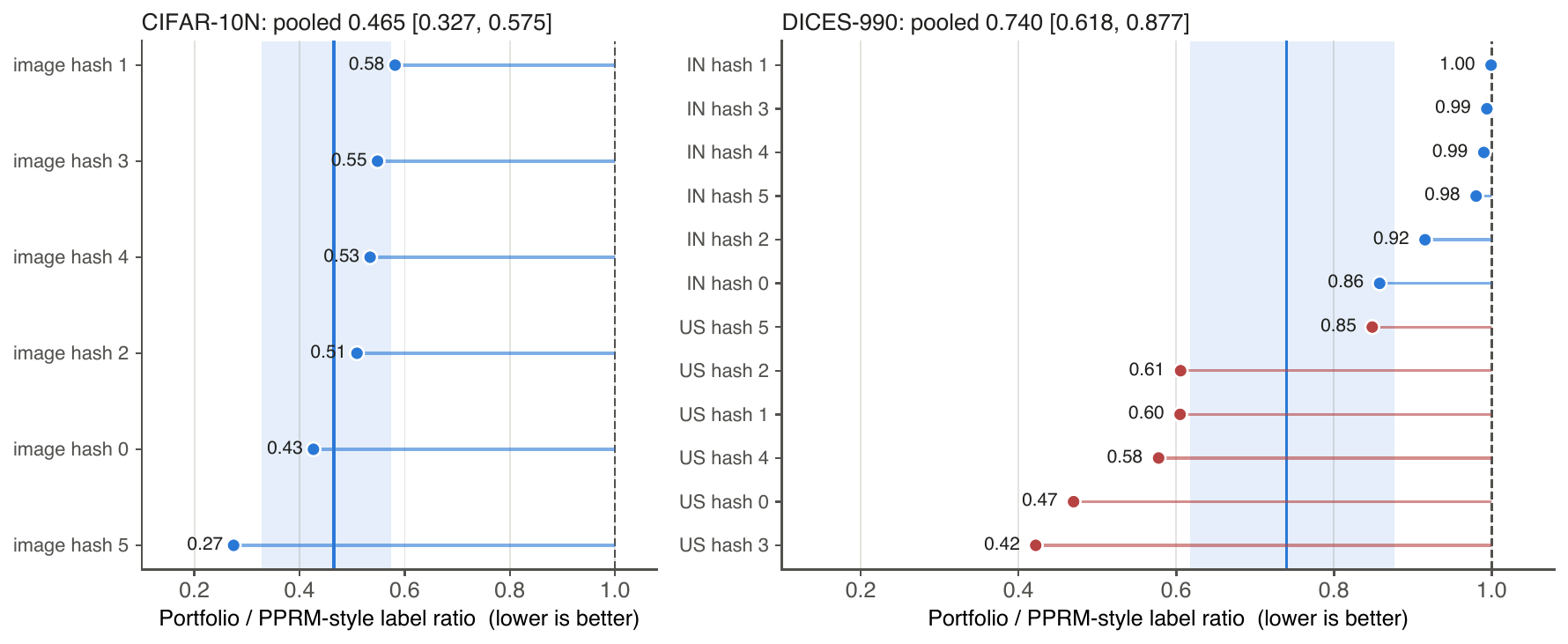}
\caption{Per-block portfolio/PPRM-style label ratios (\down) at full budget, with the pooled ratio and its 95\% interval (line and band) and parity (dashed). DICES-990: blue for India-locale, red for U.S.-locale blocks.}
\label{fig:blocks}
\end{figure}

Table~\ref{tab:external} applies the unchanged method to the six external blocks: four CIFAR-10 blocks at controlled Gaussian-noise severities $0$ to $3$ (10{,}000 rows each), SVHN as an out-of-distribution block (26{,}032 rows), and UCI Naval Propulsion (2{,}387 rows), each with a fixed candidate model and a cheap loss proxy. The gain is largest on the higher-severity CIFAR blocks, where either history or quickly learned target residuals give a decisive betting direction, and nearly vanishes on Naval, where both methods already resolve quickly. Budget checkpoints and seeds are correlated, so we do not convert the absence of false certifications into an independence-based error estimate; validity rests on Theorem~\ref{thm:validity}.

\begin{table}[t]
\caption{External blocks at full budget. Both methods resolve every decision correctly without false certification. Labels are mean $\pm$ SD over five seeds; the ratio is of mean labels.}
\label{tab:external}
\small
\begin{center}
\begin{tabular}{lrrrc}
\toprule
Block & Portfolio labels \down & PPRM-style labels \down & Label ratio \down & Decision (desc.) \\
\midrule
CIFAR-10, noise severity 0 & $2{,}925.2\pm2{,}581.2$ & $4{,}684.6\pm2{,}288.4$ & $0.624$ & release \\
CIFAR-10, noise severity 1 & $165.8\pm66.6$ & $354.4\pm71.4$ & $0.468$ & release \\
CIFAR-10, noise severity 2 & $81.6\pm46.6$ & $221.2\pm15.1$ & $0.369$ & release \\
CIFAR-10, noise severity 3 & $60.0\pm0.0$ & $202.6\pm0.5$ & $\mathbf{0.296}$ & release \\
SVHN (out of distribution) & $5{,}752.4\pm2{,}671.8$ & $8{,}981.6\pm2{,}821.8$ & $0.641$ & reject \\
Naval regression & $305.4\pm59.7$ & $333.6\pm13.2$ & $0.916$ & release \\
\bottomrule
\end{tabular}
\end{center}
\end{table}

\section{Corrupted advice, mixture weight, and scaling}\label{app:stress}

These diagnostics evaluate portfolio vigilance on synthetic finite pools, separately from the natural-data analysis. Table~\ref{tab:advice} reports the advice-corruption grid and Table~\ref{tab:scaling} the gap and error-budget scaling. Figure~\ref{fig:hedge} plots the corruption grid.

\begin{figure}[t]
\centering
\includegraphics[width=0.62\textwidth]{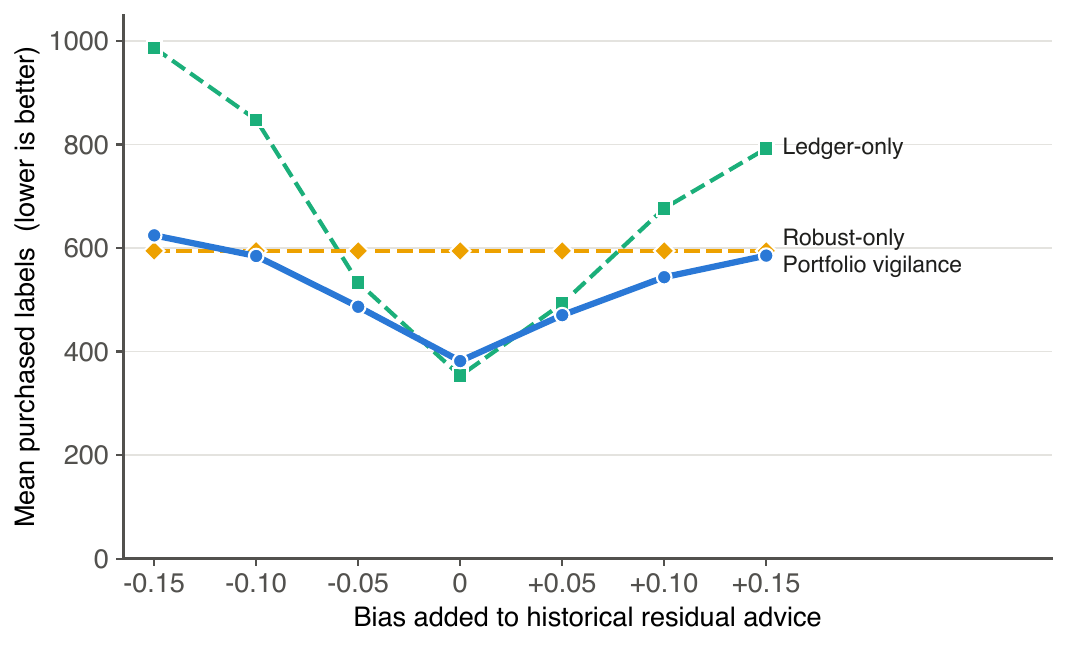}
\caption{Mean purchased labels (\down) as the historical advice is biased, on synthetic finite pools.}
\label{fig:hedge}
\end{figure}

\begin{table}[h]
\caption{Mean purchased labels (\down) under biased advice ($N=1{,}200$, threshold $0.20$). Last column: portfolio labels over the cheaper component.}
\label{tab:advice}
\small
\begin{center}
\begin{tabular}{rrrrr}
\toprule
Advice bias & Ledger-only \down & Portfolio \down & Robust-only \down & Portfolio / better component \\
\midrule
$-0.15$ & $986.7$ & $624.5$ & $594.5$ & $1.050$ \\
$-0.10$ & $847.7$ & $584.6$ & $594.5$ & $0.983$ \\
$-0.05$ & $533.5$ & $486.6$ & $594.5$ & $0.912$ \\
$0$ & $353.4$ & $381.7$ & $594.5$ & $1.080$ \\
$+0.05$ & $492.8$ & $470.4$ & $594.5$ & $0.955$ \\
$+0.10$ & $676.7$ & $543.7$ & $594.5$ & $0.915$ \\
$+0.15$ & $792.3$ & $585.4$ & $594.5$ & $0.985$ \\
\bottomrule
\end{tabular}
\end{center}
\end{table}

\paragraph{Mixture weight.} On the same construction with 500 replications, a safe pool (risk $0.20$) with exact advice needs $464.5$, $113.0$, and $101.8$ mean labels at ledger weight $0$, $0.5$, and $1$; with misleading advice (forecast $0.5$) it needs $464.5$, $493.1$, and $734.5$. An unsafe pool (risk $0.40$) needs $270.5$, $110.6$, and $98.2$ with exact advice and $270.5$, $308.5$, and $695.5$ with misleading advice (forecast $0.1$). Empirical error rates are at most $0.016$ in every cell. The weight trades efficiency between the two advice regimes and does not affect validity.

\paragraph{Risk gap.} With safe risk $0.20$, unsafe risk $0.40$, and thresholds giving gaps from $0.02$ to $0.15$, mean labels with aligned advice fall from $679.4$ to $61.0$ on the safe side and from $672.8$ to $48.4$ on the unsafe side; misleading advice raises every cost but preserves the monotone pattern, so distance from the decision boundary remains the dominant difficulty.

\begin{table}[h]
\caption{Portfolio scaling with aligned advice ($N=5{,}000$, $\beta=0.10$, threshold $0.30$) against the lower bound of Theorem~\ref{thm:vigilance}.}
\label{tab:scaling}
\small
\begin{center}
\begin{tabular}{rrrrr}
\toprule
Gap & $\delta$ & Mean labels \down & Lower bound & Ratio \\
\midrule
$0.03$ & $0.05$ & $1{,}145.9$ & $281.9$ & $4.06$ \\
$0.05$ & $0.05$ & $468.0$ & $102.4$ & $4.57$ \\
$0.075$ & $0.05$ & $220.1$ & $45.9$ & $4.80$ \\
$0.10$ & $0.05$ & $121.9$ & $26.0$ & $4.69$ \\
$0.15$ & $0.05$ & $54.1$ & $11.6$ & $4.67$ \\
\midrule
$0.10$ & $0.01$ & $188.8$ & $41.7$ & $4.52$ \\
$0.10$ & $0.10$ & $91.0$ & $19.2$ & $4.74$ \\
\bottomrule
\end{tabular}
\end{center}
\end{table}

\section{Natural-data sensitivity}\label{app:naturalsensitivity}

These analyses keep the roles, outcomes, sampling design, random orders, and certifier of the primary study and vary one factor at a time. Table~\ref{tab:margins} varies the threshold margin added to ledger risk.

\begin{table}[h]
\caption{Threshold-margin sensitivity. Res@20: correct resolution at 20\% budget (portfolio/PPRM). The portfolio makes no false certification at any margin.}
\label{tab:margins}
\small
\begin{center}
\begin{tabular}{lrrrrr}
\toprule
Dataset & Margin & Ratio \down & Portfolio labels \down & PPRM labels \down & Res@20 \up \\
\midrule
CIFAR-10N & $0.020$ & $0.428$ & $938.9$ & $2{,}032.7$ & $0.100/0.000$ \\
CIFAR-10N & $0.030$ & $0.403$ & $624.4$ & $1{,}452.8$ & $0.367/0.000$ \\
CIFAR-10N & $0.050$ & $0.465$ & $279.9$ & $574.6$ & $0.900/0.400$ \\
CIFAR-10N & $0.075$ & $0.451$ & $158.3$ & $314.3$ & $1.000/0.900$ \\
CIFAR-10N & $0.100$ & $0.414$ & $98.4$ & $211.6$ & $1.000/1.000$ \\
DICES-990 & $0.020$ & $0.729$ & $1{,}367.1$ & $1{,}782.2$ & $0.192/0.046$ \\
DICES-990 & $0.030$ & $0.674$ & $1{,}245.8$ & $1{,}707.9$ & $0.225/0.054$ \\
DICES-990 & $0.050$ & $0.740$ & $1{,}182.2$ & $1{,}398.3$ & $0.346/0.138$ \\
DICES-990 & $0.075$ & $0.709$ & $1{,}048.7$ & $1{,}241.9$ & $0.513/0.275$ \\
DICES-990 & $0.100$ & $0.649$ & $860.5$ & $1{,}089.5$ & $0.642/0.458$ \\
\bottomrule
\end{tabular}
\end{center}
\end{table}

\paragraph{CIFAR-10N observability.} The primary proxy uses canonical class plus released worker and annotation-time side information. Retrained on canonical class only, the portfolio/PPRM ratio is $0.4658$ ($279.8$ against $574.3$ labels), essentially the primary $0.4654$, with $90\%$ against $40\%$ resolution at 20\% budget and zero portfolio false certifications.

\paragraph{A non-LLM evaluator-version change on DICES.} We define an older cheap evaluator as a 10k-feature unigram TF-IDF logistic model and the current evaluator as the primary 30k-feature unigram-plus-bigram model, compute historical advice against the old evaluator, and expose the current evaluator on the target pool. At full budget the portfolio needs $1{,}184.8$ labels with old-to-current advice against $1{,}182.2$ with aligned advice, ledger-only $1{,}181.1$ against $1{,}181.4$, and robust-only $1{,}368.8$ in both, with no false certification and the same single false rejection. This modest change leaves the historical forecast informative, so it shows stability under one natural evaluator replacement rather than worst-case drift.

\paragraph{DICES block heterogeneity.} Per-block ratios are $0.858$, $0.999$, $0.916$, $0.994$, $0.990$, and $0.980$ for India-locale hash blocks $0$ to $5$, and $0.470$, $0.605$, $0.606$, $0.422$, $0.578$, and $0.849$ for the U.S.-locale blocks. The ratio has Spearman correlation $-0.91$ with both the mean and the minimum absolute distance between block risk and threshold, and $+0.72$ with the mean absolute ledger forecast error.

\section{Controlled LLM evaluator-configuration shift}\label{app:llmshift}

We design both LLM studies after fixing the primary protocol, so they complement rather than replace the DICES-990 and CIFAR-10N results. An initial single-run study judges the DICES ledger and final items once per rubric; the repeated-judge study reported in Section~\ref{sec:sensitivity} judges every item three times per rubric and adds ToxicChat.

\subsection{Initial single-run study}

\paragraph{Design and blinding.} Both judges use the same model, GPT-5.6 Luna, with the same reasoning setting and a strict structured-output schema. Judge A uses a legacy narrow safety rubric and Judge B an updated contextual-risk rubric, and both return probabilities and NO/YES/UNSURE labels for harmful content, unfair bias, misinformation, and overall unsafe content. The judge-visible payload contains only an opaque record identifier, the conversation context, and the assistant response; human labels, role assignment, thresholds, historical outputs, certification state, and the research hypothesis are withheld. The deterministic DICES item split supplies 201 ledger and 298 final conversations, 499 per judge and 998 judge calls in total, joined back to 14{,}620 ledger and 21{,}702 final rating rows. The analysis reuses the primary random orders, budgets, $\delta$, $\beta$, and ledger-risk-plus-$0.05$ thresholds. Judge A supplies the historical ledger and Judge B the current proxy in $A\!\to\!B$; $A\!\to\!A$ and $B\!\to\!B$ are stationary controls.

\paragraph{Provenance.} All 998 judge calls complete successfully and report the intended model, and re-parsing every raw response reproduces the normalized judgments exactly. The two judges consume 659{,}890 input and 76{,}998 output tokens in total.

\paragraph{Shift diagnostics.} On the 298 final conversations the mean absolute A/B probability difference is $0.0708$ and the discrete A/B disagreement rate across the four outcomes is $8.98\%$. Residual mismatch for harmful content, unfair bias, misinformation, and overall unsafe content is $(0.0307,0.0118,0.0083,0.0103)$ under $A\!\to\!A$, $(0.0162,0.0474,0.0579,0.0873)$ under $A\!\to\!B$, and $(0.0243,0.0006,0.0142,0.0003)$ under $B\!\to\!B$. For these diagnostics larger values indicate more evaluator change, not better performance. The update is not larger on every outcome, but it creates clear mismatch for unfair bias, misinformation, and especially overall safety.

\begin{table}[h]
\caption{LLM evaluator-configuration study at full budget. FC/FR over 240 decisions; P/P: portfolio/PPRM-style label ratio with its 95\% interval.}
\label{tab:llmshift}
\small
\begin{center}
\begin{tabular}{llrrrrrl}
\toprule
Condition & Method & Labels \down & Resolution \up & FC \down & FR \down & AUC \up & P/P ratio \down \\
\midrule
$A\!\to\!A$ & Portfolio & $1{,}214.9$ & $0.9958$ & $0$ & $1$ & $0.559$ & $0.674$ $[0.519,0.846]$ \\
& Ledger-only & $1{,}323.4$ & $1.0000$ & $0$ & $0$ & $0.498$ & \\
& Robust-only & $1{,}394.1$ & $0.9958$ & $0$ & $1$ & $0.505$ & \\
& PPRM-style CM-EB & $1{,}545.1$ & $1.0000$ & $0$ & $0$ & $0.336$ & \\
& Fresh WoR betting & $1{,}586.1$ & $0.9917$ & $1$ & $1$ & $0.316$ & \\
& Passive fixed budget & $1{,}808.5$ & $1.0000$ & $0$ & $0$ & $0.521$ & \\
\midrule
$A\!\to\!B$ & Portfolio & $1{,}304.5$ & $0.9958$ & $0$ & $1$ & $0.545$ & $0.779$ $[0.629,0.924]$ \\
& Ledger-only & $1{,}458.3$ & $1.0000$ & $0$ & $0$ & $0.452$ & \\
& Robust-only & $1{,}383.5$ & $0.9958$ & $0$ & $1$ & $0.491$ & \\
& PPRM-style CM-EB & $1{,}551.8$ & $1.0000$ & $0$ & $0$ & $0.325$ & \\
& Fresh WoR betting & $1{,}586.1$ & $0.9917$ & $1$ & $1$ & $0.316$ & \\
& Passive fixed budget & $1{,}808.5$ & $1.0000$ & $0$ & $0$ & $0.521$ & \\
\midrule
$B\!\to\!B$ & Portfolio & $1{,}203.5$ & $0.9958$ & $0$ & $1$ & $0.573$ & $0.663$ $[0.510,0.838]$ \\
& Ledger-only & $1{,}268.8$ & $1.0000$ & $0$ & $0$ & $0.522$ & \\
& Robust-only & $1{,}383.5$ & $0.9958$ & $0$ & $1$ & $0.491$ & \\
& PPRM-style CM-EB & $1{,}551.8$ & $1.0000$ & $0$ & $0$ & $0.325$ & \\
& Fresh WoR betting & $1{,}586.1$ & $0.9917$ & $1$ & $1$ & $0.316$ & \\
& Passive fixed budget & $1{,}808.5$ & $1.0000$ & $0$ & $0$ & $0.521$ & \\
\bottomrule
\end{tabular}
\end{center}
\end{table}

\begin{figure}[h]
\centering
\includegraphics[width=\textwidth]{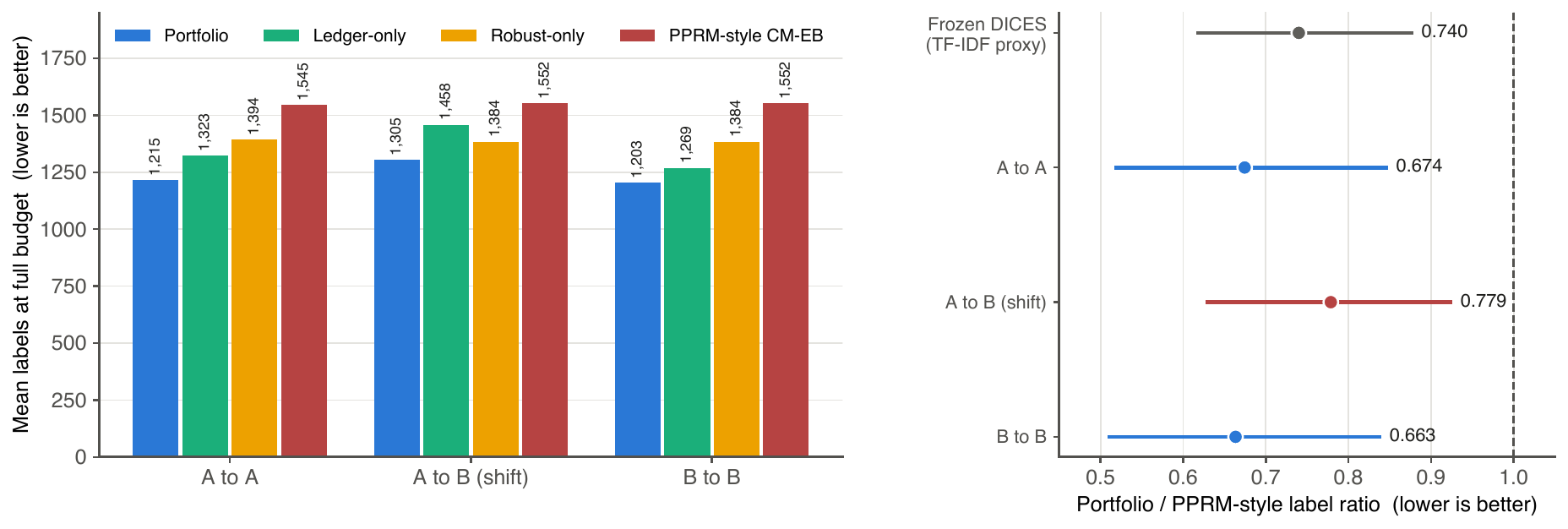}
\caption{LLM evaluator-configuration study. Left: mean full-budget labels (\down). Right: portfolio/PPRM-style label ratio (\down) with 95\% intervals, beside the primary DICES-990 result.}
\label{fig:llmshift}
\end{figure}

\paragraph{Paired contrasts.} Table~\ref{tab:llmshift} gives full-budget results for every method, and Figure~\ref{fig:llmshift} compares the portfolio's label ratio across conditions with the primary result. Under $A\!\to\!B$, paired block-bootstrap label differences (portfolio minus comparator) are $-153.8$ against ledger-only (95\% CI $[-288.8,-40.1]$), $-79.0$ against robust-only ($[-148.3,-19.2]$), and $-247.2$ against PPRM ($[-426.8,-101.1]$), and portfolio AUC exceeds PPRM by $0.2196$ ($[0.1720,0.2729]$). Relative to $A\!\to\!A$, the ledger-only expert needs $134.9$ more labels under the shift ($[2.8,291.1]$) and the portfolio $89.6$ more ($[-2.7,206.6]$); relative to $B\!\to\!B$ the increases are $189.5$ ($[33.5,366.9]$) and $101.1$ ($[23.3,204.1]$). Robust-only and PPRM are identical between $A\!\to\!B$ and $B\!\to\!B$ by construction, since neither uses A-residual advice.

\paragraph{Reading.} The pattern is the one \Mtwo{} predicts: stale advice costs the ledger expert efficiency without touching validity, and the mixture loses less because it retains the target-adaptive expert. The effect is modest. The portfolio's advantage over PPRM under the shift ($0.779$) is smaller than under either stationary control and than in the primary study ($0.740$), and every condition's interval overlaps the others. The portfolio's one false rejection occurs in the same U.S.-locale block under the same random order in all three conditions, so it does not arise from the rubric change. Fresh WoR betting's one false certification, in an India-locale block, likewise recurs in every condition.

\paragraph{Integrity.} The study covers the complete design of three conditions, 12 blocks, five seeds, nine budgets, and six methods, and an independent check recomputes every reported summary from the per-block results.

\subsection{Repeated-judge study}

\paragraph{Design.} The model, rubrics, schema, and blinding are those of the single-run study. Each rubric judges every item in three new independent runs (A2 to A4 and B2 to B4): all 990 DICES conversations and all 5{,}654 human-annotated ToxicChat-0124 examples. The single-run outputs are kept separate and are not reused as repetitions. ToxicChat is split by stable hash, without inspecting labels, into 2{,}245 proxy-training, 1{,}169 ledger, 547 calibration, and 1{,}693 final examples, and the final role forms eight hash blocks; DICES keeps its 12 final blocks. The monitors use the mean of the three runs of each rubric, the five primary random orders, and the nine budgets, and they add a without-replacement Hoeffding CS to the comparators. The written protocol for this study planned 50 random orders; we ran the five primary orders. Orders are averaged within each block before $10{,}000$ block-bootstrap replicates, so the reduced number of orders affects Monte Carlo stability but not the inferential unit. An automated audit confirms the complete design of 51{,}840 DICES and 34{,}560 ToxicChat method rows and the seed list.

\paragraph{Rubric change against run-to-run variation.} Table~\ref{tab:judgerepeat} compares judge scores between runs of the same rubric with scores between rubrics, using $10{,}000$ item-bootstrap replicates. The between-rubric change exceeds the within-rubric change in every role of both datasets; the table shows the final roles.

\begin{table}[h]
\caption{Judge run-to-run variation against rubric change on final items: mean absolute score change and binary-label disagreement within rubric A, within rubric B, and between rubrics, and the excess of the between-rubric value over the within-rubric value with its 95\% interval.}
\label{tab:judgerepeat}
\small
\begin{center}
\begin{tabular}{llrrrl}
\toprule
Dataset & Metric & Within A & Within B & Between & Excess \\
\midrule
DICES-990 ($n=298$) & Score change & $0.0181$ & $0.0365$ & $0.0749$ & $0.0476$ $[0.0331,0.0633]$ \\
& Label disagreement & $0.0162$ & $0.0336$ & $0.0910$ & $0.0661$ $[0.0474,0.0864]$ \\
ToxicChat ($n=1{,}693$) & Score change & $0.0139$ & $0.0136$ & $0.0516$ & $0.0379$ $[0.0301,0.0460]$ \\
& Label disagreement & $0.0110$ & $0.0122$ & $0.0566$ & $0.0450$ $[0.0357,0.0546]$ \\
\bottomrule
\end{tabular}
\end{center}
\end{table}

\paragraph{Full results.} Table~\ref{tab:repeatedfull} gives full-budget label costs in all three conditions. The pessimistic cost charges the full block for any false certification, false rejection, or unresolved decision. The portfolio makes no false certification in any condition; on DICES-990 it makes one false rejection in each condition, as does robust-only. Moving from $B\!\to\!B$ to $A\!\to\!B$ raises the ledger-only cost from $1{,}274.7$ to $1{,}445.9$ labels on DICES-990 and from $130.6$ to $150.2$ on ToxicChat, while the portfolio rises from $1{,}214.3$ to $1{,}305.2$ and from $130.7$ to $137.6$.

\begin{table}[h]
\caption{Repeated-judge study at full budget: mean labels (\down) and pessimistic labels (\down) over blocks and the five random orders. FC/FR: false certifications/rejections over all runs. Robust-only, PPRM, and the fresh and passive methods do not use A-residual advice, so their $A\!\to\!B$ and $B\!\to\!B$ values coincide.}
\label{tab:repeatedfull}
\small
\setlength{\tabcolsep}{3.5pt}
\begin{center}
\begin{tabular}{llrrrrrrr}
\toprule
& & \multicolumn{3}{c}{DICES-990} & & \multicolumn{3}{c}{ToxicChat} \\
\cmidrule{3-5}\cmidrule{7-9}
Condition & Method & Labels & Pessim. & FC/FR & & Labels & Pessim. & FC/FR \\
\midrule
$A\!\to\!A$ & Portfolio & $1{,}201.1$ & $1{,}220.8$ & $0/1$ & & $131.5$ & $131.5$ & $0/0$ \\
& Ledger-only & $1{,}282.8$ & $1{,}282.8$ & $0/0$ & & $129.0$ & $129.0$ & $0/0$ \\
& Robust-only & $1{,}369.2$ & $1{,}389.0$ & $0/1$ & & $129.5$ & $134.3$ & $0/1$ \\
& PPRM-style CM-EB & $1{,}536.9$ & $1{,}536.9$ & $0/0$ & & $210.7$ & $210.7$ & $0/0$ \\
\midrule
$A\!\to\!B$ & Portfolio & $1{,}305.2$ & $1{,}325.1$ & $0/1$ & & $137.6$ & $137.6$ & $0/0$ \\
& Ledger-only & $1{,}445.9$ & $1{,}445.9$ & $0/0$ & & $150.2$ & $150.2$ & $0/0$ \\
& Robust-only & $1{,}414.2$ & $1{,}434.4$ & $0/1$ & & $137.5$ & $137.5$ & $0/0$ \\
& PPRM-style CM-EB & $1{,}544.3$ & $1{,}544.3$ & $0/0$ & & $208.5$ & $208.5$ & $0/0$ \\
\midrule
$B\!\to\!B$ & Portfolio & $1{,}214.3$ & $1{,}234.3$ & $0/1$ & & $130.7$ & $130.7$ & $0/0$ \\
& Ledger-only & $1{,}274.7$ & $1{,}274.7$ & $0/0$ & & $130.6$ & $130.6$ & $0/0$ \\
& Robust-only & $1{,}414.2$ & $1{,}434.4$ & $0/1$ & & $137.5$ & $137.5$ & $0/0$ \\
& PPRM-style CM-EB & $1{,}544.3$ & $1{,}544.3$ & $0/0$ & & $208.5$ & $208.5$ & $0/0$ \\
\midrule
All & Fresh WoR betting & $1{,}586.1$ & $1{,}595.7$ & $1/1$ & & $173.3$ & $178.1$ & $0/1$ \\
& WoR Hoeffding CS & $1{,}808.5$ & $1{,}808.5$ & $0/0$ & & $211.6$ & $211.6$ & $0/0$ \\
& Passive fixed budget & $1{,}808.5$ & $1{,}808.5$ & $0/0$ & & $211.6$ & $211.6$ & $0/0$ \\
\bottomrule
\end{tabular}
\end{center}
\end{table}

\paragraph{Paired comparisons under $A\!\to\!B$.} Table~\ref{tab:repeatedpaired} gives portfolio-minus-comparator label differences with block-bootstrap intervals, the number of the nine pairings of one A run with one B run in which the portfolio is cheaper, and the range of the difference when any one of the five random orders is left out. Every comparison favors the portfolio except robust-only on ToxicChat, where the interval, the pairing count, and the leave-one-order-out range all include no difference. Pessimistic costs give the same conclusions: on DICES-990 the differences are $-120.8$ ($[-218.0,-40.8]$) against ledger-only, $-109.3$ ($[-199.4,-32.9]$) against robust-only, and $-219.2$ ($[-349.7,-111.8]$) against PPRM, and on ToxicChat they equal the ordinary differences.

\begin{table}[h]
\caption{Portfolio minus comparator, full-budget labels under $A\!\to\!B$ (negative favors the portfolio). CI: block-bootstrap 95\% interval after averaging the five random orders. Pairings: A/B run pairings (of nine) in which the portfolio is cheaper. LOO: range over leaving out one random order.}
\label{tab:repeatedpaired}
\small
\begin{center}
\begin{tabular}{llrlrl}
\toprule
Dataset & Comparator & Difference & 95\% CI & Pairings & LOO range \\
\midrule
DICES-990 & Ledger-only & $-140.7$ & $[-256.2,-46.2]$ & $9/9$ & $[-148.8,-132.1]$ \\
& Robust-only & $-109.1$ & $[-199.3,-32.6]$ & $9/9$ & $[-118.6,-97.2]$ \\
& PPRM-style CM-EB & $-239.1$ & $[-394.4,-114.2]$ & $9/9$ & $[-272.4,-219.9]$ \\
& WoR Hoeffding CS & $-503.3$ & & $9/9$ & $[-520.4,-488.7]$ \\
\midrule
ToxicChat & Ledger-only & $-12.7$ & $[-20.8,-5.9]$ & $9/9$ & $[-14.9,-10.3]$ \\
& Robust-only & $+0.05$ & $[-4.65,4.05]$ & $4/9$ & $[-3.22,2.75]$ \\
& PPRM-style CM-EB & $-70.9$ & $[-97.1,-42.6]$ & $9/9$ & $[-74.2,-68.3]$ \\
& WoR Hoeffding CS & $-74.1$ & & $9/9$ & $[-77.5,-69.9]$ \\
\bottomrule
\end{tabular}
\end{center}
\end{table}

\paragraph{The price of trusted transport.}\label{app:rho} The same data illustrate \Mone. Theorem~\ref{thm:trustedtransport} says the zero-label certifiable set shrinks as the declared radius $\rho$ grows; Figure~\ref{fig:rho} sets this against the fraction of cells for which the bridge happens to hold on the realized target data, a fact that only the target labels transport was meant to save can reveal. At $\rho=0$ a zero-label certificate would cover $81\%$ and $77\%$ of the DICES-990 candidate-block cells under rubrics A and B, although the bridge holds for only $35\%$ and $50\%$; every certificate issued where the bridge fails is unjustified. On DICES-990 the radius at which the bridge first holds everywhere ($0.12$ and $0.10$) leaves $0\%$ and $2\%$ of the cells certifiable, so a bridge wide enough to be true removes the zero-label advantage. ToxicChat, with a single outcome and eight blocks, is more forgiving: a radius of $0.05$ (rubric A) or $0.02$ (rubric B) holds on every block and still certifies $62.5\%$ and $100\%$ of them, but nothing in the cheap data identifies that radius in advance (Proposition~\ref{prop:nolabelfree}).

\begin{figure}[t]
\centering
\includegraphics[width=\textwidth]{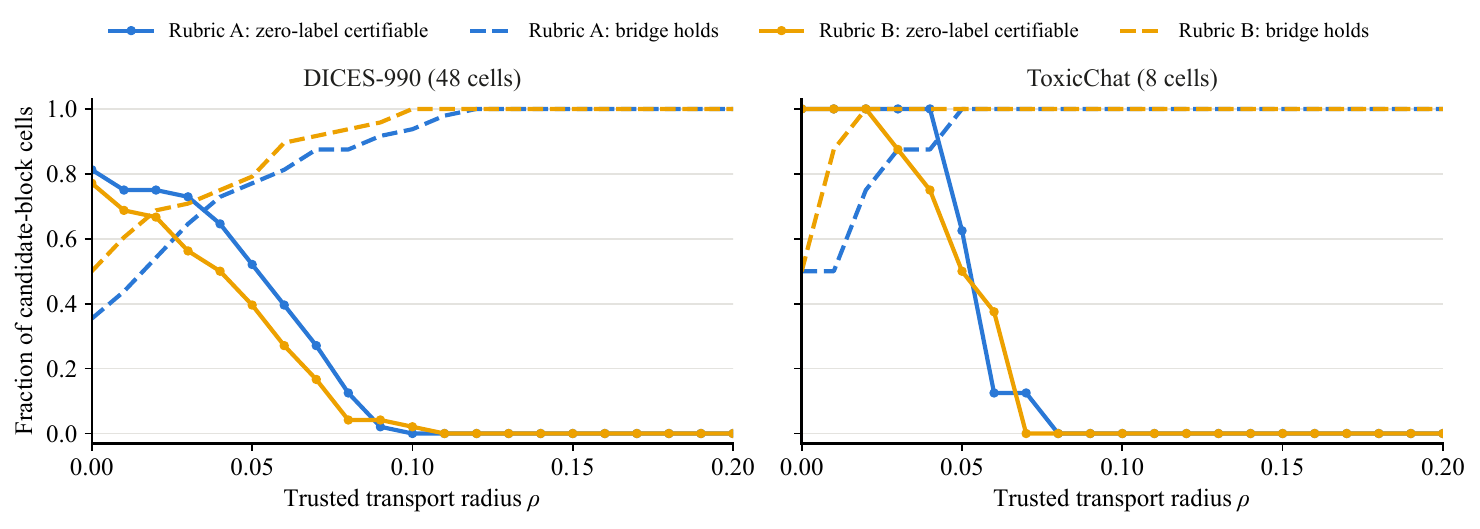}
\caption{Trusted-transport radius $\rho$ against the fraction of candidate-block cells that a zero-label certificate would certify (solid) and the fraction for which the bridge $\mu_E^{\mathrm{tar}}\le\mu_E^{\mathrm{hist}}+\rho$ holds on the realized target data (dashed), under each rubric's stationary judge grid of the repeated-judge study. The certificate uses the historical mean residual in place of the upper bound $U_H$, which makes it optimistic. A radius wide enough to hold removes most or all zero-label certificates on DICES-990.}
\label{fig:rho}
\end{figure}

\paragraph{Endpoint calibration comparison.} Noisy-but-Valid \citep{feng2026noisy} tests a fixed endpoint rather than monitoring sequentially, so we report it only as a diagnostic. With its calibration-set estimates of the judge's true- and false-positive rates, it certifies none of the ten rubric-by-endpoint cases on ToxicChat and on the DICES conversation-level majority endpoints, including the eight whose actual risk is below the threshold, so it makes no false certification but also no correct certification.

\section{Hidden failure of unverified transport}\label{app:hiddenfailure}

This study uses an earlier certifier family and different data from the rest of the paper, so we do not pool its results with Section~\ref{sec:results}; it asks only whether trusting a calibrated bridge without verifying it fails on real data. The data are ACSIncome and ACSPublicCoverage \citep{ding2021folktables} for four states with year-separated roles (2014 training and ledger, 2015 bridge calibration, 2016 to 2018 held-out tests), logistic candidates, and a 60-cell descriptor partition that defines the bridge. \TransportFirst{} releases every candidate the calibrated bridge covers and never re-checks a released candidate with labels; \VigilCap{} is a fixed-sample variant that also releases bridge-covered candidates; the exploratory \VigilCell{} verifies each bridge-covered cell with a sequential test on fresh labels. The shared-ledger oracle uses the bridge with zero labels, and the proxy-only control trusts the cheap scores alone.

For every held-out block, an injected copy raises the trusted losses of one proxy-score decile cell until every candidate exceeds its threshold by $0.02$, leaving proxies, descriptors, and the bridge untouched: the real-data analogue of the hidden alternatives in Section~\ref{sec:vigilance}. At the prespecified release margin of $0.02$ the calibrated bridge (width $0.048$) is wider than the margin, so it transports nothing and no method false-certifies. The larger margins, the calibrated $0.038$ and $0.05$, are exploratory and were chosen after this diagnostic.

\begin{table}[h]
\caption{Hidden-failure stress test on Folktables: false certification on injected blocks (120 per margin) and fresh labels on paired clean blocks. Margins other than $0.02$ are exploratory.}
\label{tab:injection}
\small
\begin{center}
\begin{tabular}{lrrrrrr}
\toprule
& \multicolumn{2}{c}{Margin $0.02$} & \multicolumn{2}{c}{Margin $0.038$} & \multicolumn{2}{c}{Margin $0.05$} \\
\cmidrule(lr){2-3}\cmidrule(lr){4-5}\cmidrule(lr){6-7}
Method & FC \down & Labels \down & FC \down & Labels \down & FC \down & Labels \down \\
\midrule
\TransportFirst{} & $0.000$ & $48{,}275$ & $0.417$ & $29{,}525$ & $\mathbf{0.750}$ & $10{,}816$ \\
\VigilCap{} & $0.000$ & $20{,}472$ & $0.417$ & $12{,}795$ & $\mathbf{0.750}$ & $5{,}118$ \\
\VigilCell{} (exploratory) & $0.000$ & $33{,}552$ & $0.000$ & $22{,}838$ & $0.000$ & $11{,}933$ \\
Selective residual CS & $0.000$ & $7{,}924$ & $0.000$ & $6{,}195$ & $0.000$ & $4{,}733$ \\
PPRM-reset & $0.000$ & $6{,}813$ & $0.000$ & $4{,}645$ & $0.000$ & $3{,}362$ \\
Betting monitor & $0.000$ & $18{,}749$ & $0.000$ & $8{,}695$ & $0.000$ & $10{,}644$ \\
Fresh-label Hoeffding & $0.000$ & $35{,}768$ & $0.000$ & $23{,}679$ & $0.000$ & $16{,}140$ \\
\midrule
Shared-ledger oracle & $0.000$ & $0$ & $0.417$ & $0$ & $\mathbf{0.750}$ & $0$ \\
Proxy-only control & $1.000$ & $0$ & $1.000$ & $0$ & $1.000$ & $0$ \\
\bottomrule
\end{tabular}
\end{center}
\end{table}

\begin{figure}[h]
\centering
\includegraphics[width=0.62\textwidth]{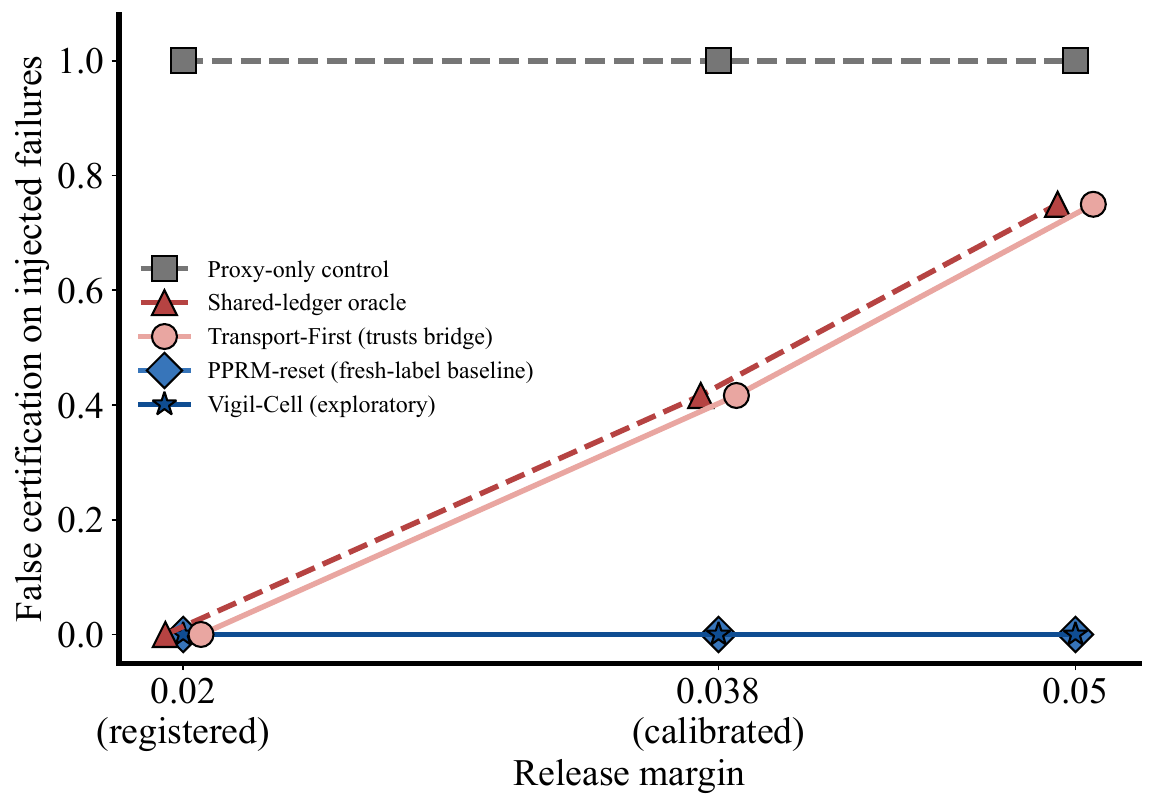}
\caption{False certification on injected failures across the three tested margins (Table~\ref{tab:injection}). The shared-ledger oracle and \TransportFirst{}, which trusts the bridge without verifying it, rise together once the bridge transports; fresh-label baselines and the exploratory cell-verifying certifier stay at zero.}
\label{fig:margin-collapse}
\end{figure}

Wherever the bridge transports, \TransportFirst{} is block for block identical to the no-label oracle, false-certifying $0.417$ and $0.750$ of the injected blocks, while every fresh-label baseline catches all of them (Figure~\ref{fig:margin-collapse}). The exploratory \VigilCell{} stays at zero false certification on all 120 injected blocks at every margin; at margin $0.05$, where the bridge covers 90 of 120 blocks, it uses $1{,}489\pm96$ labels on covered blocks against $1{,}685\pm126$ for PPRM and rejects an injected failure with a median of $346$ labels against $7{,}175$. Aggregated over all blocks PPRM remains cheaper ($3{,}362$ against $11{,}933$), and because margin $0.05$ was chosen after the diagnostics, this comparison is post hoc. These failures of blind transport motivate treating untrusted history as advice rather than as part of validity.

\end{document}

%% file: math_commands.tex
\usepackage{amsmath,amsfonts,bm}

\def\eqref#1{equation~\ref{#1}}

\def\1{\bm{1}}

\DeclareMathAlphabet{\mathsfit}{\encodingdefault}{\sfdefault}{m}{sl}
\SetMathAlphabet{\mathsfit}{bold}{\encodingdefault}{\sfdefault}{bx}{n}

\newcommand{\E}{\mathbb{E}}

\newcommand{\KL}{D_{\mathrm{KL}}}

